\documentclass[sn-basic,Numbered]{sn-jnl}

\usepackage{graphicx}
\usepackage{amsmath,amssymb,amsfonts}
\usepackage{amsthm}
\usepackage{bm}
\usepackage{placeins}
\usepackage{booktabs}
\usepackage{xcolor}
\hypersetup{hidelinks,bookmarksdepth=3,
  pdftitle={General Frequency Principle for Linear Approximation Classes},
  pdfauthor={Yujun Wang and Yiqi Gu}}
\theoremstyle{thmstyleone}
\newtheorem{theorem}{Theorem}[section]
\newtheorem{proposition}[theorem]{Proposition}
\newtheorem{lemma}[theorem]{Lemma}
\newtheorem{corollary}[theorem]{Corollary}
\theoremstyle{thmstyletwo}

\numberwithin{equation}{section}
\newcommand{\linearsectionrange}{\ref*{sec:preliminaries}--\ref*{sec:learning}}
\newcommand{\dd}{\mathrm{d}}
\newcommand{\ii}{\mathrm{i}}
\newcommand{\ee}{\mathrm{e}}
\newcommand{\R}{\mathbb{R}}
\newcommand{\C}{\mathbb{C}}
\newcommand{\N}{\mathbb{N}}
\newcommand{\eps}{\varepsilon}

\newcommand{\ipO}[2]{\left\langle #1,#2\right\rangle_{\Omega}}

\newcommand{\ipL}[2]{\left\langle #1,#2\right\rangle_{\Lambda}}
\newcommand{\norm}[1]{\left\|#1\right\|}
\newcommand{\abs}[1]{\left|#1\right|}
\newcommand{\wh}[1]{\widehat{#1}}
\newcommand{\ran}{\operatorname{range}}

\newcommand{\Bk}{\bm{k}}

\begin{document}

\title[General Frequency Principle for Linear Approximation Classes]{General Frequency Principle for Linear Approximation Classes}

\author[1]{\fnm{Yujun} \sur{Wang}}\email{yujunwang@std.uestc.edu.cn}
\author*[1]{\fnm{Yiqi} \sur{Gu}}\email{yiqigu@uestc.edu.cn}

\affil[1]{\orgdiv{School of Mathematical Sciences},
  \orgname{University of Electronic Science and Technology of China},
  \orgaddress{\city{Chengdu}, \postcode{611731}, \country{China}}}

\abstract{In this paper, we study a general frequency principle under gradient descent training for the finite linear approximation classes. Under suitable assumptions, we prove that the learner amplitude relative to the target amplitude vanishes at high frequencies, uniformly throughout training. We also establish sufficient conditions for low-frequency priority, covering single-frequency targets trained separately and two frequency components trained in the same run. This general frequency principle does not imply that lower frequencies always converge faster. Instead, for two frequencies that satisfy certain comparison conditions, one component will take longer to learn than the other; and in common cases, the longer component has higher frequency. Numerical experiments with Legendre and trigonometric bases validate the theoretical results.}

\keywords{frequency principle, spectral bias, approximation capacity, learning time, linear approximation classes, gradient descent}
\pacs[MSC Classification (2020)]{41A25, 33C10, 42B10, 65D15, 68T07}

\maketitle

\section{Introduction}
\label{sec:introduction}
Gradient-based training of neural networks often fits low-frequency target components before high-frequency ones. This behavior, known as the frequency principle or spectral bias, has been observed across network architectures and activation functions \cite{xu2019training,rahaman2019spectralbias,xu2020freqprinciple,xu2024overview}. A related question is how frequency-dependent learning can be characterized for linear approximation classes with fixed basis functions. In such classes, a small high-frequency response may result from limited representation or slow convergence toward the best available approximation. Distinguishing these effects is necessary for interpreting the learning order.

One approach to the frequency principle uses Fourier decay. For shallow tanh networks with small input weights, Xu et al.~\cite{xu2020freqprinciple} compared frequency contributions to parameter gradients and, in an idealized two-frequency setting, loss decay rates. Rahaman et al.~\cite{rahaman2019spectralbias} analyzed the Fourier decay of ReLU networks and their parameter derivatives through the piecewise linear structure. Luo et al.~\cite{luo2021theory} estimated high-frequency contributions to loss or output changes under regularity and bounded-trajectory assumptions. Their gradient-flow results cover initial and intermediate training, with final-stage estimates under convergence to a nondegenerate global minimizer. These analyses concern gradients or changes during training, which differ from the time required for an individual Fourier error to reach a prescribed threshold.

A second approach studies the spectrum of the training operator. For fixed linear classes, coefficient gradient descent is Landweber iteration, with convergence governed by the Gram spectrum \cite{landweber1951iteration,engl1996regularization}. Neural tangent kernel (NTK) theory gives a related description of wide networks near their initial linearization \cite{jacot2018ntk,du2019gradient,allen2019convergence,arora2019exact,lee2019wide}. Kernel eigenvalues control residual decay \cite{arora2019fine,cao2021spectralbias}; under uniform spherical data, spherical harmonics relate these rates to frequency \cite{basri2019convergence,cao2021spectralbias}. Nonuniform data density also affects convergence \cite{basri2020frequency}. Fourier descriptions have been derived through mean-field linearization \cite{zhang2019explicitbias} and infinite-width NTK dynamics \cite{luo2022exact}. At finite width, Zhang et al.~\cite{zhang2025shallow} studied Gram spectra, finite-precision approximation limits, and lower bounds on learning times for generalized Fourier coefficients. Bowman and Mont\'ufar~\cite{bowman2022implicit} bounded deviations from NTK dynamics under width and sample-size conditions. For general fixed bases, Fourier modes need not be eigenfunctions of the training operator. Comparing their learning times therefore requires the distribution of each represented mode among the eigendirections, together with its attainable approximation error.

In this paper, we study a general frequency principle for finite linear approximation classes on a bounded box. Let $\Omega=[a,b]^{d}\subset\mathbb R^d$ be a bounded domain and $f\in L^2(\Omega)$ a possibly complex-valued target. We consider using a learner with linear structure to approximate $f$, i.e.,
\begin{equation}\label{h}
h_{\bm c}(\bm x)=\sum_{s=1}^n c_s\phi_s(\bm x), \qquad \bm x\in\Omega,
\end{equation}
where $\phi_1,\ldots,\phi_n\in L^2(\Omega)$ are fixed and linearly independent and only the coefficients $\bm c=(c_1,\ldots,c_n)\in\C^n$ are trained.

Our main contributions are as follows. First, we establish bounds on high-frequency relative movement. Under zero initialization and suitable Fourier decay assumptions, Theorem~\ref{thm:linear-fp} shows that the learner amplitude relative to the target amplitude tends to zero at high frequencies along the prescribed frequency set. This holds at each fixed iteration, and Corollary~\ref{cor:uniform-relative-movement} gives a uniform bound throughout training under a stable step size.

Second, we derive learning-time bounds relative to the best approximation available in the class. Theorem~\ref{thm:comparison} gives sufficient conditions for strict learning order between two single-frequency targets trained separately through comparisons of learning-time bounds, spectral distributions, and spectral means. For lower- and higher-frequency pairs satisfying the corresponding conditions, the lower frequency reaches the prescribed relative error first.

Third, we extend the analysis to general targets. We give sufficient conditions for the order in which two components in the same training run reach and remain below a prescribed relative error. Experiments with Legendre and trigonometric bases illustrate the high-frequency estimates and learning-time results.

These results characterize high-frequency relative movement and conditional learning order within a common linear framework. Here, "general" refers to establishing learning order by comparing bounds: if an upper learning-time bound for one mode is below a lower bound for another, the first mode is learned earlier. Under suitable conditions on the basis, this comparison also implies that the first mode has the smaller frequency magnitude, recovering low-frequency priority for such pairs. Table~\ref{tab:frequency-principle-comparison} compares this analysis with representative results in the literature.

\begin{table}[!htbp]
\centering
\caption{Comparison of representative results on frequency-dependent learning.}
\label{tab:frequency-principle-comparison}
\footnotesize
\setlength{\tabcolsep}{3pt}
\renewcommand{\arraystretch}{1.12}
\begin{tabular}{@{}p{.14\textwidth}p{.29\textwidth}p{.34\textwidth}p{\dimexpr.23\textwidth-18pt\relax}@{}}
\hline
\raggedright Work & \raggedright Setting and conditions & \raggedright Quantities and conclusions & \raggedright Main tools\tabularnewline
\hline
\raggedright Xu et al.~\cite{xu2020freqprinciple} &
\raggedright Shallow tanh networks; small input weights; two-frequency loss for rate comparisons &
\raggedright Low-frequency dominance in parameter gradients and instantaneous loss decay under the stated conditions &
\raggedright Explicit Fourier transforms; small-weight estimates\tabularnewline[5pt]
\raggedright Rahaman et al.~\cite{rahaman2019spectralbias} &
\raggedright ReLU networks with piecewise linear structure &
\raggedright Fourier decay of network outputs and parameter derivatives; implications for high-frequency changes &
\raggedright Fourier analysis; geometry of linear regions\tabularnewline[5pt]
\raggedright Luo et al.~\cite{luo2021theory} &
\raggedright Population risk; regularity and bounded trajectories; nondegenerate minimizer for final-stage estimates &
\raggedright Bounds on high-frequency contributions relative to total loss or output changes at different training stages &
\raggedright Sobolev regularity; Fourier decay; gradient flow\tabularnewline[5pt]
\raggedright Cao et al.~\cite{cao2021spectralbias} &
\raggedright Two-layer ReLU networks; width and sample-size conditions; uniform spherical data for frequency analysis &
\raggedright Residual convergence along NTK eigenspaces; frequency-dependent rates through spherical harmonics &
\raggedright NTK spectra; probability estimates; spherical harmonics\tabularnewline[5pt]
\raggedright Luo et al.~\cite{luo2022exact} &
\raggedright Infinite-width two-layer NTK dynamics; initialization conditions for explicit formulas &
\raggedright Fourier-domain residual dynamics and frequency dependence of the evolution operator &
\raggedright NTK dynamics; Fourier transforms; operator calculations\tabularnewline[5pt]
\raggedright Zhang et al.~\cite{zhang2025shallow} &
\raggedright Finite-width shallow ReLU models; nonlinear bounds use an interval model with fixed input weights, a cutoff, and initialization and weight conditions &
\raggedright Gram spectra and finite-precision limits; time lower bounds for generalized Fourier coefficients of an auxiliary function &
\raggedright Gram spectra; differential operators; gradient flow\tabularnewline[5pt]
\hline
\raggedright This work &
\raggedright Fixed finite linear parameterizations; zero initialization; basis and target Fourier decay for movement bounds; spectral separation for ordering; respective step-size conditions &
\raggedright Capacity and convergence progress in a common framework: high-frequency relative-movement bounds and sufficient conditions for learning-time order after removing the limiting approximation error &
\raggedright Exact discrete dynamics; Fourier decay; projection capacity; local spectral measures and their means\tabularnewline
\hline
\end{tabular}
\end{table}

This paper is organized as follows. Section~\ref{sec:preliminaries} introduces the training operator and its spectral representation. Section~\ref{sec:high-frequency} studies high-frequency relative movement, and Section~\ref{sec:learning} derives learning-time bounds and conditions for learning order. Both sections include numerical experiments. Section~\ref{sec:conclusion} concludes.

\section{Notation and preliminary results}\label{sec:preliminaries}
\makeatletter
\protected@edef\@currentlabel{\protect\linearsectionrange}
\label{sec:linear}
\makeatother
This section introduces the Fourier coordinates and the training operator used throughout the paper. We first identify the attainable least-squares limit, then relate each Fourier mode to the capacity and spectrum of the approximation class.

\subsection{Approximation classes and Fourier coordinates}
Throughout the paper, we use linear structure \eqref{h} to approximate $f$. We call $V=\operatorname{span}\{\phi_1,\ldots,\phi_n\}$ the approximation class and denote by $P_V$ the orthogonal projection operator from $L^2(\Omega)$ onto $V$. Therefore, the projection $P_Vf$ is the part of $f$ that can be truly learned from $V$, which we refer to as the represented part of $f$.

For $u\in L^2(\Omega)$, extend $u$ on $\R^d$ with zero in $\R^d\backslash\Omega$  and define the unitary Fourier transform
\begin{equation}\label{eq01}
\wh u(\Bk)=(2\pi)^{-d/2}\int_{\Omega}u(\bm x)\ee^{-\ii\Bk\cdot\bm x}~\dd\bm x, \qquad \Bk\in\R^d.
\end{equation}
Fix $\vartheta\in\R^d$ and set
\begin{equation}\label{eq:lattice}
\Lambda=\frac{2\pi}{b-a}\left(\mathbb Z^d+\vartheta\right),\qquad \chi_{\Bk}(\bm x)=\abs{\Omega}^{-1/2}\ee^{\ii\Bk\cdot\bm x},\qquad \Bk\in\Lambda .
\end{equation}
For $\Bk\in\Lambda$, $\norm{\Bk}$ denotes its Euclidean norm,
$\norm{\Bk}=(\sum_{j=1}^d\abs{k_j}^2)^{1/2}$. The functions $\{\chi_{\Bk}\}_{\Bk\in\Lambda}$ form an orthonormal basis of $L^2(\Omega)$. We further write
\begin{equation}\label{eq:lattice-coefficient}
u_{\Bk}=\ipO{u}{\chi_{\Bk}}=\left(\frac{2\pi}{b-a}\right)^{d/2}\wh u(\Bk),\qquad \Bk\in\Lambda,\qquad u^\Lambda=(u_{\Bk})_{\Bk\in\Lambda}.
\end{equation}
Thus the lattice coefficients are the values of the Fourier transform \eqref{eq01} on $\Lambda$, up to a fixed factor. By Parseval's identity, the map $u\mapsto u^\Lambda$ is unitary from $L^2(\Omega)$ onto $\ell^2(\Lambda)$, and
\begin{equation}\label{eq:parseval}
\ipL{p^\Lambda}{q^\Lambda}=\sum_{\Bk\in\Lambda}p_{\Bk}\overline{q_{\Bk}}=\ipO{p}{q},\qquad p,q\in L^2(\Omega).
\end{equation}

Now we consider the regression problem. For a target $f\in L^2(\Omega)$ and approximation $h_{\bm c}$, the loss function is given by
\begin{equation}
J(\bm c)=\frac12\norm{h_{\bm c}-f}_{L^2(\Omega)}^2.
\end{equation}
Using gradient descent with learning rate $\eta>0$ leads to
\begin{equation}\label{eq02}
c_s^{(t+1)}=c_s^{(t)}-\eta\ipO{h^{(t)}-f}{\phi_s}, \quad t=0,1,\dots,
\end{equation}
where $c_s^{(t)}$ and $c_s^{(t+1)}$ denote $c_s$ at iterations $t$ and $t+1$, respectively, and $h^{(t)}$ denotes $h_{\bm c}$ at iteration $t$. Note that $h^{(0)}$ is the initialized learner. In matrix form, \eqref{eq02} reads 
\begin{equation}
\bm c^{(t+1)}=\bm c^{(t)}-\eta G\bm c^{(t)}+\eta\bigl(\ipO{f}{\phi_r}\bigr)_{r=1}^n,
\end{equation}
where $G$ is the Gram matrix with entries $G_{rs}=\ipO{\phi_s}{\phi_r}$.

Since $c_{s}\in \mathbb{C}$, we write $c_s=\xi_s+\ii\zeta_s$ with $\xi_s,\zeta_s\in\R$. Differentiation with respect to $\xi_s,\zeta_s$ gives
\begin{equation}
\frac{\partial J}{\partial \xi_s}=\operatorname{Re}\ipO{h_{\bm c}-f}{\phi_s}, \qquad \frac{\partial J}{\partial \zeta_s}=\operatorname{Im}\ipO{h_{\bm c}-f}{\phi_s}.
\end{equation}

\subsection{Training operator and convergence}\label{sec:training-operator}
We express the Fourier error dynamics as an operator iteration and determine its limit. For $s=1,\ldots,n$, let $\phi_s^\Lambda=((\phi_s)_{\Bk})_{\Bk\in\Lambda}$ denote the lattice coefficients \eqref{eq:lattice-coefficient} of the basis functions. Define $\mathcal S:\ell^2(\Lambda)\to\ell^2(\Lambda)$ by
\begin{equation}\label{operator s}
\begin{aligned}
(\mathcal S u^\Lambda)_{\Bk}&=\sum_{\bm l\in\Lambda}\mathcal K_{\Bk\bm l}u_{\bm l},\qquad u\in L^2(\Omega),\\
\mathcal K_{\Bk\bm l}&=\sum_{s=1}^n (\phi_s)_{\Bk}\overline{(\phi_s)_{\bm l}},\qquad \Bk,\bm l\in\Lambda .
\end{aligned}
\end{equation}
The kernel $\mathcal K$ is the Gram matrix of the sequences $\phi_1^\Lambda,\ldots,\phi_n^\Lambda$, indexed by frequency. Equivalently, $\mathcal S$ represents the integral operator on $L^2(\Omega)$ with kernel $\sum_{s=1}^n\phi_s(\bm x)\overline{\phi_s(\bm y)}$ in the orthonormal basis $\{\chi_{\Bk}\}$. A finite-rank singular value expansion gives the following spectral properties \cite{engl1996regularization}.

\begin{proposition}\label{prop01}
The operator $\mathcal S$ is finite-rank, self-adjoint, and positive semidefinite. Moreover, for every $u\in L^2(\Omega)$,
\begin{equation}\label{eq03}
(\mathcal S u^\Lambda)_{\Bk}=\sum_{s=1}^n (\phi_s)_{\Bk}\ipL{u^\Lambda}{\phi_s^\Lambda}, \qquad \ipL{\mathcal S u^\Lambda}{u^\Lambda}=\sum_{s=1}^n\abs{\ipL{u^\Lambda}{\phi_s^\Lambda}}^2.
\end{equation}
In particular, $\ran\mathcal S=\{v^\Lambda:v\in V\}$, and the nonzero eigenvalues of $\mathcal S$ and $G$ coincide.
\end{proposition}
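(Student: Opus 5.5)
The plan is to factor $\mathcal S$ through $\C^n$. Define the synthesis map $\Phi:\C^n\to\ell^2(\Lambda)$ by $\Phi\bm c=\sum_{s=1}^n c_s\phi_s^\Lambda$. This map is bounded, since each $\phi_s^\Lambda\in\ell^2(\Lambda)$ by Parseval's identity \eqref{eq:parseval}. Its adjoint is $\Phi^*v=(\ipL{v}{\phi_s^\Lambda})_{s=1}^n$. Expanding the kernel in \eqref{operator s}, we get $(\mathcal S u^\Lambda)_{\Bk}=\sum_{s}(\phi_s)_{\Bk}\sum_{\bm l}u_{\bm l}\overline{(\phi_s)_{\bm l}}=\sum_s(\phi_s)_{\Bk}\ipL{u^\Lambda}{\phi_s^\Lambda}$. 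This is the first identity in \eqref{eq03}, and it says $\mathcal S=\Phi\Phi^*$. The interchange of the sum over $\bm l$ with the finite sum over $s$ is trivial. The inner sum converges absolutely by Cauchy--Schwarz in $\ell^2(\Lambda)$, so $\mathcal S$ is well defined on all of $\ell^2(\Lambda)$. Since every element of $\ell^2(\Lambda)$ has the form $u^\Lambda$ by unitarity of $u\mapsto u^\Lambda$, nothing is lost by working with $u\in L^2(\Omega)$.

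From $\mathcal S=\Phi\Phi^*$ the structural claims follow quickly. $\mathcal S$ is self-adjoint. It is finite-rank, because its range lies in $\ran\Phi=\operatorname{span}\{\phi_s^\Lambda\}$. The second identity in \eqref{eq03} is $\ipL{\mathcal Su^\Lambda}{u^\Lambda}=\norm{\Phi^*u^\Lambda}^2=\sum_s\abs{\ipL{u^\Lambda}{\phi_s^\Lambda}}^2\ge0$, which also gives positive semidefiniteness.

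For the range statement, I would first note $\ran\mathcal S\subseteq\ran\Phi=\{v^\Lambda:v\in V\}$, which holds because the lattice-coefficient map is linear. For equality I would use $\ran\Phi\Phi^*=\ran\Phi$. One way to see this is that $\ker\Phi\Phi^*=\ker\Phi^*$, since $\ipL{\Phi\Phi^*w}{w}=\norm{\Phi^*w}^2$. Taking orthogonal complements, which is legitimate because all ranges here are finite-dimensional and hence closed, gives $\ran\Phi\Phi^*=(\ker\Phi^*)^\perp=\ran\Phi$. An alternative is direct. The map $\Phi^*$ restricted to $\ran\Phi$ is injective, because $\Phi^*\Phi\bm c=G^{T}\bm c$ or $G\bm c$ up to conjugation conventions, and $G$ is invertible by linear independence. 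So for any $v^\Lambda=\Phi\bm c$ we can solve $\Phi^*w=\bm c$ with $w\in\ran\Phi$ and obtain $\mathcal Sw=v^\Lambda$.

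For the eigenvalues, I would compute $\Phi^*\Phi$. Its $(r,s)$ entry is $\ipL{\phi_s^\Lambda}{\phi_r^\Lambda}=\ipO{\phi_s}{\phi_r}=G_{rs}$ by \eqref{eq:parseval}, so $\Phi^*\Phi=G$. Then I would apply the standard fact that $AB$ and $BA$ share nonzero eigenvalues. If $\Phi\Phi^*x=\lambda x$ with $\lambda\neq0$, then $y=\Phi^*x\neq0$ and $G y=\lambda y$. Conversely, if $Gy=\lambda y$ with $\lambda\ne0$, then $x=\Phi y\ne0$ and $\mathcal S x=\lambda x$. Multiplicities match because these maps are inverse up to a factor $\lambda$ between the eigenspaces.

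The only point needing care is the index convention in $G_{rs}$, i.e.\ whether $\Phi^*\Phi$ equals $G$ or its transpose. Either way the spectrum is the same because $G$ is Hermitian, so this is not a real obstacle.
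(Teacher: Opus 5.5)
Your proof is correct. The first half, rewriting the kernel as $\mathcal S u^\Lambda=\sum_s\ipL{u^\Lambda}{\phi_s^\Lambda}\phi_s^\Lambda$ and reading off self-adjointness, finite rank and positivity, is exactly the paper's argument. The two routes diverge on the range and eigenvalue claims.

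The paper diagonalizes $G$ directly. From orthonormal eigenvectors $G\bm v_j=\lambda_j\bm v_j$ it builds $q_j=\lambda_j^{-1/2}\sum_s(\bm v_j)_s\phi_s$, an orthonormal basis of $V$ with $\mathcal S q_j^\Lambda=\lambda_jq_j^\Lambda$. It then simply asserts that $\mathcal S$ has no further nonzero eigenvalues, and leaves the range identity implicit (each $q_j^\Lambda=\mathcal S(\lambda_j^{-1}q_j^\Lambda)$ lies in $\ran\mathcal S$).

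You instead factor $\mathcal S=\Phi\Phi^*$ with $\Phi^*\Phi=G$, and use the $AB$/$BA$ lemma together with $\ker\Phi\Phi^*=\ker\Phi^*$. This gives a complete proof of exactly the steps the paper calls trivial or skips: the inclusion of the nonzero spectrum of $\mathcal S$ in that of $G$, the matching multiplicities, and $\ran\mathcal S\supseteq\{v^\Lambda:v\in V\}$.

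The paper's construction, in turn, produces the orthonormal eigenbasis $\{q_j\}$ of $V$ on which everything later rests: the projection formula in the proof of Theorem~\ref{thm:linear-convergence}, the capacity $\ell_{\Bk}$, and the local spectral measures. Your argument yields this basis too. The converse direction with $y=\bm v_j$ gives $q_j^\Lambda=\lambda_j^{-1/2}\Phi\bm v_j$, which is a unit vector since $\norm{\Phi\bm v_j}^2=\bm v_j^*G\bm v_j=\lambda_j$; this is just the singular value decomposition of $\Phi$.

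Two small remarks. With the paper's convention $G_{rs}=\ipO{\phi_s}{\phi_r}$ one has $\Phi^*\Phi=G$ exactly, so no transpose caveat is needed. In your alternative range argument, surjectivity of $\Phi^*|_{\ran\Phi}$ onto $\C^n$ follows immediately from invertibility of $\Phi^*\Phi=G$. That is more direct than going through injectivity plus a dimension count.
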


\begin{proof}
By \eqref{operator s}, $\mathcal S u^\Lambda=\sum_{s=1}^n\ipL{u^\Lambda}{\phi_s^\Lambda}\phi_s^\Lambda$, which gives \eqref{eq03}. This representation shows that $\mathcal S$ is self-adjoint and $\ran\mathcal S\subset\{v^\Lambda:v\in V\}$, so $\mathcal S$ is finite-rank. The second identity in \eqref{eq03} gives positive semidefiniteness. 

Note that $\operatorname{rank}G=n$, and $\lambda_1\ge\cdots\ge\lambda_n>0$ be the eigenvalues of $G$ with orthonormal eigenvectors $\bm v_1,\ldots,\bm v_n\in\C^n$. By Gram-Schmit process, we can write an orthonormal basis of $V$, i.e. $\{q_1,\ldots,q_n\}$, where
\begin{equation}\label{eq:qj}
q_j:=\lambda_j^{-1/2}\sum_{s=1}^n(\bm v_j)_s\phi_s,\qquad j=1,\ldots,n.
\end{equation}
Then \eqref{eq03} and \eqref{eq:parseval} give $\mathcal Sq_j^\Lambda=\lambda_j^{-1/2}\sum_{s=1}^n(G\bm v_j)_s\phi_s^\Lambda=\lambda_jq_j^\Lambda$ for all $j$. Also, it is trivial to see that $\mathcal S$ has no nonzero eigenvalues outside of $\{\lambda_j\}_{j=1}^n$. So, $\lambda_1,\dots,\lambda_n$ are exactly the nonzero eigenvalues of $\mathcal S$.
\end{proof}

The coefficient update \eqref{eq02} gives the following iteration for the Fourier error.

\begin{theorem}\label{thm01}
Let $e^{(t)}=(h^{(t)}-f)^\Lambda$ be the sequence of lattice coefficients of the residual $h^{(t)}-f$ at iteration $t$. Then
\begin{equation}\label{eq04}
e^{(t+1)}=(I-\eta\mathcal S)e^{(t)},
\end{equation}
where $I$ is the identity operator on $\ell^2(\Lambda)$.
\end{theorem}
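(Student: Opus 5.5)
The plan is to compute the change of the learner from one iteration to the next directly, and then pass to lattice coefficients. Since $f$ does not depend on $t$, we have $e^{(t+1)}-e^{(t)}=(h^{(t+1)}-h^{(t)})^\Lambda$. By linearity of \eqref{h} in the coefficients,
\[
h^{(t+1)}-h^{(t)}=\sum_{s=1}^n\bigl(c_s^{(t+1)}-c_s^{(t)}\bigr)\phi_s .
\]
The update \eqref{eq02} then turns this into
\[
h^{(t+1)}-h^{(t)}=-\eta\sum_{s=1}^n\ipO{h^{(t)}-f}{\phi_s}\,\phi_s .
\]

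Next we take lattice coefficients on both sides. The map $u\mapsto u^\Lambda$ is linear, so the coefficient sequence of the right-hand side is $-\eta\sum_{s}\ipO{h^{(t)}-f}{\phi_s}\phi_s^\Lambda$. By Parseval's identity \eqref{eq:parseval}, $\ipO{h^{(t)}-f}{\phi_s}=\ipL{e^{(t)}}{\phi_s^\Lambda}$. Comparing with the first identity in \eqref{eq03} of Proposition~\ref{prop01}, applied with $u=h^{(t)}-f$, we obtain
\[
\bigl(h^{(t+1)}-h^{(t)}\bigr)^\Lambda=-\eta\,\mathcal S e^{(t)} .
\]
Adding $e^{(t)}$ to both sides gives $e^{(t+1)}=e^{(t)}-\eta\mathcal S e^{(t)}=(I-\eta\mathcal S)e^{(t)}$, which is \eqref{eq04}.

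The only point requiring care is the complex setting. We must confirm that \eqref{eq02} is the genuine gradient step for complex coefficients. Updating $\xi_s$ and $\zeta_s$ separately with the real and imaginary partial derivatives recombines into $c_s\mapsto c_s-\eta\ipO{h-f}{\phi_s}$, consistent with the stated derivatives. We must also check the conjugation convention: $\ipO{\cdot}{\cdot}$ is linear in its first slot, so the kernel $\mathcal K_{\Bk\bm l}$ in \eqref{operator s} carries the conjugate on $(\phi_s)_{\bm l}$, exactly as needed. Beyond this bookkeeping the argument is a direct substitution.
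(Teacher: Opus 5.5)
Your proof is correct and follows essentially the same route as the paper: the update \eqref{eq02} together with Parseval's identity gives the increment of the learner, and the first identity in \eqref{eq03} identifies its lattice coefficients as $-\eta\mathcal S e^{(t)}$. The only difference is cosmetic: you form $h^{(t+1)}-h^{(t)}$ as a function before passing to lattice coefficients, whereas the paper computes $h^{(t+1)}_{\Bk}$ entry by entry for each $\Bk\in\Lambda$.
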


\begin{proof}
By \eqref{eq:parseval} and \eqref{eq02},
\begin{equation}
c_s^{(t+1)}=c_s^{(t)}-\eta\ipO{h^{(t)}-f}{\phi_s}=c_s^{(t)}-\eta\ipL{e^{(t)}}{\phi_s^\Lambda}.
\end{equation}
Therefore, for every $\Bk\in\Lambda$,
\begin{equation}
h^{(t+1)}_{\Bk}=\sum_{s=1}^n c_s^{(t+1)}(\phi_s)_{\Bk}=h^{(t)}_{\Bk}-\eta\sum_{s=1}^n(\phi_s)_{\Bk}\ipL{e^{(t)}}{\phi_s^\Lambda}=h^{(t)}_{\Bk}-\eta(\mathcal S e^{(t)})_{\Bk}.
\end{equation}
Subtracting $f_{\Bk}$ gives $e^{(t+1)}_{\Bk}=e^{(t)}_{\Bk}-\eta(\mathcal S e^{(t)})_{\Bk}$, which is \eqref{eq04}.
\end{proof}

The convergence of gradient descent and the limit follow from the spectral decomposition given as follows.

\begin{theorem}\label{thm:linear-convergence}
Let $P_R$ and $P_N$ denote the orthogonal projections onto the range and the null space of $\mathcal S$, respectively. If $0<\eta<\frac{2}{\lambda_{1}}$, then
\begin{equation}\label{eq:convergence-projection}
\lim_{t\to\infty}P_R e^{(t)}=0, \qquad P_N e^{(t)}=P_N e^{(0)}.
\end{equation}
Moreover, $e^{(t)}$ converges in $\ell^2(\Lambda)$ to
\begin{equation}\label{eq:e-infinity}
e^{(\infty)}=-\left(f-P_Vf\right)^\Lambda .
\end{equation}
In particular, $e^{(t)}\to0$ if and only if $f\in V$.
\end{theorem}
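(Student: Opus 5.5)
We combine the error iteration of Theorem~\ref{thm01} with the spectral description of $\mathcal S$ from Proposition~\ref{prop01}. Let $q_1,\ldots,q_n$ be the orthonormal basis of $V$ from \eqref{eq:qj}, with $\mathcal S q_j^\Lambda=\lambda_j q_j^\Lambda$. Since $\mathcal S$ is self-adjoint and finite-rank, $\ell^2(\Lambda)$ splits orthogonally as $\ran\mathcal S\oplus\ker\mathcal S$. Here $\ran\mathcal S=\{v^\Lambda:v\in V\}$ is spanned by the $q_j^\Lambda$. Hence
\begin{equation*}
P_R u^\Lambda=\sum_{j=1}^n\ipL{u^\Lambda}{q_j^\Lambda}q_j^\Lambda=(P_Vu)^\Lambda,
\end{equation*}
where the last equality uses the Parseval identity \eqref{eq:parseval}. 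Consequently $P_N u^\Lambda=(u-P_Vu)^\Lambda$.

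First, we iterate \eqref{eq04} to get $e^{(t)}=(I-\eta\mathcal S)^te^{(0)}$. The operator $I-\eta\mathcal S$ acts as the identity on $\ker\mathcal S$, which gives $P_Ne^{(t)}=P_Ne^{(0)}$ at once. On the range it acts diagonally:
\begin{equation*}
P_Re^{(t)}=\sum_{j=1}^n(1-\eta\lambda_j)^t\ipL{e^{(0)}}{q_j^\Lambda}q_j^\Lambda .
\end{equation*}
The step-size condition $0<\eta<2/\lambda_1$ together with $\lambda_1\ge\cdots\ge\lambda_n>0$ gives $\abs{1-\eta\lambda_j}<1$ for every $j$. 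It is important that all $\lambda_j$ are strictly positive, which holds because $G$ has full rank since the $\phi_s$ are linearly independent. The sum is finite, so $\norm{P_Re^{(t)}}\le\max_j\abs{1-\eta\lambda_j}^t\norm{e^{(0)}}\to0$, a geometric rate.

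Second, we identify the limit. Since $h^{(0)}\in V$, we have $e^{(0)}=(h^{(0)}-f)^\Lambda$ and
\begin{equation*}
P_Ne^{(0)}=\bigl((h^{(0)}-f)-P_V(h^{(0)}-f)\bigr)^\Lambda=-(f-P_Vf)^\Lambda .
\end{equation*}
Writing $e^{(t)}=P_Re^{(t)}+P_Ne^{(0)}$ then shows $e^{(t)}\to-(f-P_Vf)^\Lambda$ in $\ell^2(\Lambda)$. This limit is independent of the initialization.

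Finally, the unitarity of $u\mapsto u^\Lambda$ shows that $e^{(\infty)}=0$ if and only if $f=P_Vf$, that is, if and only if $f\in V$.

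The only delicate point is the identification $P_R=(P_V\cdot)^\Lambda$. This requires verifying that the $q_j^\Lambda$ span the whole range and that the range is closed, which holds because it is finite-dimensional. Both facts are supplied by Proposition~\ref{prop01}. The remaining steps are routine.
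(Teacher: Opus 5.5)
Your proof is correct and follows essentially the same route as the paper. Both arguments expand $e^{(t)}=(I-\eta\mathcal S)^t e^{(0)}$ in the orthonormal eigenbasis $q_j^\Lambda$ of $\ran\mathcal S$, note that the null-space component is frozen, use $\abs{1-\eta\lambda_j}<1$ for the decay, and identify $P_N e^{(0)}=-((I-P_V)f)^\Lambda$ via $P_R w^\Lambda=(P_V w)^\Lambda$ and $h^{(0)}\in V$; your explicit remarks that $\lambda_n>0$ and that the range is closed fill in details the paper leaves implicit.
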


\begin{proof}
By \eqref{eq04}, $e^{(t)}=(I-\eta\mathcal S)^te^{(0)}$. 
Since $q_1^\Lambda,\ldots,q_n^\Lambda$ form an orthonormal basis of $\ran\mathcal S$ and $P_N \mathcal S=0$, we have
\begin{equation}\label{eq:projection-proof}
P_R e^{(t)}=\sum_{j=1}^n(1-\eta\lambda_j)^t\ipL{e^{(0)}}{q_j^\Lambda}q_j^\Lambda, \qquad P_N e^{(t)}=P_Ne^{(0)}.
\end{equation}
The condition  $0<\eta<2/\lambda_{1}$ implies $\abs{1-\eta\lambda_j}<1$ for $1\le j\le n$, which proves \eqref{eq:convergence-projection}. Since $\ran\mathcal S=\{v^\Lambda:v\in V\}$, we have $P_Rw^\Lambda=(P_Vw)^\Lambda$ and $P_Nw^\Lambda=((I-P_V)w)^\Lambda$ for every $w\in L^2(\Omega)$. Finally, $h^{(0)}\in V$ gives $P_Ne^{(0)}=-((I-P_V)f)^\Lambda$, which proves \eqref{eq:e-infinity}.
\end{proof}

The functions $q_j$ are generally not Fourier modes. We therefore write the iteration \eqref{eq04} at a fixed frequency and separate same-frequency damping from frequency mixing.

For a Fourier frequency $\Bk\in\Lambda$, define the functional $\mathcal{M}_{\Bk}:\ell^2(\Lambda)\rightarrow\C$ by
\begin{equation}\label{eq:Skk-R}
\mathcal M_{\Bk}u^\Lambda=(\mathcal S u^\Lambda)_{\Bk}-\mathcal K_{\Bk\Bk}u_{\Bk}=\sum_{\bm l\ne\Bk}\mathcal K_{\Bk\bm l}u_{\bm l},\qquad u\in L^2(\Omega),
\end{equation}
where $\mathcal K_{\Bk\Bk}$ acts as the damping coefficient at frequency $\Bk$, and $\mathcal M_{\Bk}e^{(t)}$ is the mixing contribution from the other residual modes. Equation~\eqref{eq04} gives the exact pointwise decomposition
\begin{equation}\label{eq:point-recursion}
e_{\Bk}^{(t+1)} =(1-\eta \mathcal K_{\Bk\Bk})e_{\Bk}^{(t)}-\eta \mathcal M_{\Bk}e^{(t)}.
\end{equation}
Here $(1-\eta \mathcal K_{\Bk\Bk})e_{\Bk}^{(t)}$ is the same-frequency contribution, and $-\eta\mathcal M_{\Bk}e^{(t)}$ accounts for frequency mixing. Since the lattice modes form an orthonormal basis, \eqref{eq:point-recursion} describes the exact training dynamics without discretization. The Fourier dynamics are diagonal only if $\mathcal K_{\Bk\bm l}=0$ for $\Bk\ne\bm l$, equivalently, if every lattice mode is an eigenvector of $\mathcal S$.

\subsection{Capacity and local spectral measures}\label{sec:capacity}

To distinguish representational capacity from learning speed, we use the orthonormal basis $\{q_j\}$ of $V$ from Proposition~\ref{prop01} and define the capacity at frequency $\Bk$ by
\begin{equation}\label{eq:capacity}
\ell_{\Bk}=\norm{P_V\chi_{\Bk}}_{L^2(\Omega)}^2=\sum_{j=1}^{n}\abs{(q_j)_{\Bk}}^2 \qquad \Bk\in\Lambda,
\end{equation}
where $(q_j)_{\Bk}=\ipO{q_j}{\chi_{\Bk}}$. The capacity satisfies $0\le\ell_{\Bk}\le1$. Since $q_1^\Lambda,\ldots,q_n^\Lambda$ form an orthonormal basis of $\ran\mathcal S$ and satisfy $\mathcal S q_j^\Lambda=\lambda_jq_j^\Lambda$, the self-adjointness of $\mathcal S$ yields
\begin{equation}\label{Sw}
\mathcal S w
=\sum_{j=1}^{n}\lambda_j
\ipL{w}{q_j^\Lambda}q_j^\Lambda,
\qquad w\in\ell^2(\Lambda).
\end{equation}
Applying \eqref{operator s}, \eqref{eq03} and \eqref{Sw} to the unit vector $e_{\Bk}\in\ell^2(\Lambda)$ gives
\begin{equation}\label{eq:Skk-spectral}
\mathcal K_{\Bk\Bk}=\sum_{s=1}^n\abs{(\phi_s)_{\Bk}}^2=\ipL{\mathcal Se_{\Bk}}{e_{\Bk}}=\sum_{j=1}^{n}\lambda_j\abs{(q_j)_{\Bk}}^2.
\end{equation}
Hence
\begin{equation}
\lambda_{n}\ell_{\Bk}\le \mathcal K_{\Bk\Bk} \le\lambda_{1}\ell_{\Bk}.
\end{equation}
For $\ell_{\Bk}>0$, we define the local spectral measure of $\Bk$ through the weights
\begin{equation}\label{eq:local-measure}
\varpi_j(\Bk)=\frac{\abs{(q_j)_{\Bk}}^2}{\ell_{\Bk}},\qquad j=1,\ldots,n.
\end{equation}
By \eqref{eq:capacity}, these weights are nonnegative and sum to one. We call their mean
\begin{equation}\label{eq:speed}
\bar\lambda_{\Bk}=\sum_{j=1}^{n}\varpi_j(\Bk)\lambda_j=\frac{\mathcal K_{\Bk\Bk}}{\ell_{\Bk}}
\end{equation}
the speed at frequency $\Bk$. The second equality follows from \eqref{eq:Skk-spectral}. Thus the same-frequency damping coefficient in \eqref{eq:point-recursion} factors as
\begin{equation}\label{eq:factorization}
\mathcal K_{\Bk\Bk}=\ell_{\Bk}\,\bar\lambda_{\Bk}.
\end{equation}
The speed $\bar\lambda_{\Bk}$ measures the average learning rate of the represented part. Note that $\frac{\partial h_{\bm c}}{\partial c_s}=\phi_s$, the coefficient $\mathcal K_{\Bk\Bk}$ is the sum of the squared moduli of the derivatives of $(h_{\bm c})_{\Bk}$ with respect to $c_s$. Its high-frequency decay underlies the explanation in \cite{rahaman2019spectralbias}. The factorization \eqref{eq:factorization} separates this decay into effects on capacity and speed.

\subsection{Boundary traces and capacity bounds}\label{sec:trace-capacity}
We begin with an integration-by-parts estimate relating the order of the first nonzero boundary trace to the Fourier decay rate. For $j\in\{1,\ldots,d\}$ and $\bm x\in\mathbb{R}^d$, let $\bm x'_j$ denote the vector obtained from $\bm x$ by removing its $j$-th entry, and set $\Omega'_j=[a,b]^{d-1}$. For $t\in[a,b]$ and $\bm x'_j\in\Omega'_j$, we use the shorthand
\begin{equation}
u(t,\bm x'_j):=u(x_1,\ldots,x_{j-1},t,x_{j+1},\ldots,x_d),
\end{equation}
which replaces the $j$-th coordinate by $t$ while keeping the other coordinates unchanged. We apply one-dimensional integration by parts in $x_j$ with the remaining variables fixed. For a nonnegative integer $m$, let $W^{m,1}(\Omega)$ denote the Sobolev space of functions in $L^1(\Omega)$ whose weak derivatives up to order $m$ also belong to $L^1(\Omega)$.

\begin{lemma}\label{lem:trace-decay}
Let $\nu\in\N_0=\{0,1,2,\ldots\}$ and $u\in W^{\nu+1,1}(\Omega)$. Assume that, for every $j\in\{1,\ldots,d\}$ and every integer $0\le p<\nu$, 
\begin{equation}\label{eq:trace-vanish}
\partial_{x_j}^p u(a,\bm x'_j)=\partial_{x_j}^p u(b,\bm x'_j)=0 \qquad\text{for almost every }\bm x'_j\in\Omega'_j.
\end{equation}
Then there is $B_u>0$, depending only on $u$, such that
\begin{equation}\label{eq:u-decay}
\abs{\wh u(\Bk)}\le B_u\norm{\Bk}^{-(\nu+1)}, \qquad \Bk\in\R^d\setminus\{0\}.
\end{equation}
\end{lemma}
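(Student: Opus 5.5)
We reduce the multivariate estimate to one-dimensional integration by parts in the coordinate where $\Bk$ is largest. Fix $\Bk\in\R^d\setminus\{0\}$ and choose $j$ with $\abs{k_j}=\max_i\abs{k_i}$. Then $\abs{k_j}\ge d^{-1/2}\norm{\Bk}$, so it suffices to prove
\begin{equation*}
\abs{\wh u(\Bk)}\le C_j\abs{k_j}^{-(\nu+1)}
\end{equation*}
with $C_j$ depending only on $u$. We then take $B_u=d^{(\nu+1)/2}\max_j C_j$.

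By Fubini we write
\begin{equation*}
\wh u(\Bk)=(2\pi)^{-d/2}\int_{\Omega'_j}\ee^{-\ii\Bk'_j\cdot\bm x'_j}\int_a^b u(t,\bm x'_j)\ee^{-\ii k_jt}\,\dd t\,\dd\bm x'_j .
\end{equation*}
Since $u\in W^{\nu+1,1}(\Omega)$, for almost every $\bm x'_j$ the slice $t\mapsto u(t,\bm x'_j)$ has a representative in $W^{\nu+1,1}(a,b)$ whose classical derivatives up to order $\nu$ are absolutely continuous, and whose weak derivatives agree with the slices of $\partial_{x_j}^p u$. This is the ACL characterization of Sobolev functions. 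On such a slice we integrate by parts $\nu+1$ times, using $\int_a^b g(t)\ee^{-\ii k_jt}\dd t=\bigl[g\,\ee^{-\ii k_jt}/(-\ii k_j)\bigr]_a^b+(\ii k_j)^{-1}\int_a^b g'(t)\ee^{-\ii k_jt}\dd t$ with $g=\partial_{x_j}^p u(\cdot,\bm x'_j)$ for $p=0,\dots,\nu$. For $p<\nu$ the boundary terms vanish by \eqref{eq:trace-vanish}. The final step $p=\nu$ leaves a boundary term
\begin{equation*}
(\ii k_j)^{-(\nu+1)}\bigl[\partial_{x_j}^\nu u(t,\bm x'_j)\,\ee^{-\ii k_jt}\bigr]_{t=a}^{t=b}
\end{equation*}
up to sign, together with the integral $(\ii k_j)^{-(\nu+1)}\int_a^b\partial_{x_j}^{\nu+1}u\,\ee^{-\ii k_jt}\dd t$. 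Integrating over $\bm x'_j$ and taking absolute values then gives
\begin{equation*}
\abs{\wh u(\Bk)}\le(2\pi)^{-d/2}\abs{k_j}^{-(\nu+1)}\Bigl(\norm{\partial_{x_j}^{\nu+1}u}_{L^1(\Omega)}+\norm{\partial_{x_j}^\nu u(a,\cdot)}_{L^1(\Omega'_j)}+\norm{\partial_{x_j}^\nu u(b,\cdot)}_{L^1(\Omega'_j)}\Bigr).
\end{equation*}

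The two trace norms are finite by the $L^1$ trace theorem, since $\partial_{x_j}^\nu u\in W^{1,1}(\Omega)$. This can also be seen directly in one dimension: $\sup_t\abs{g(t)}\le (b-a)^{-1}\norm{g}_{L^1(a,b)}+\norm{g'}_{L^1(a,b)}$, and integrating over $\bm x'_j$ bounds each trace norm by $(b-a)^{-1}\norm{\partial_{x_j}^\nu u}_{L^1}+\norm{\partial_{x_j}^{\nu+1}u}_{L^1}$. Hence $C_j$ depends only on $u$, which gives \eqref{eq:u-decay}.

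The main technical point is justifying the slicewise integration by parts for Sobolev (not classical) functions. This includes identifying the pointwise boundary values on slices with the traces in \eqref{eq:trace-vanish}, and the measurability and integrability needed for Fubini. I would handle it through the absolutely continuous representative on almost every line, as above. Alternatively, one can approximate $u$ by $C^\infty(\overline\Omega)$ functions in $W^{\nu+1,1}$, prove the identity with the boundary terms kept explicitly, and pass to the limit using continuity of the trace $W^{1,1}(\Omega)\to L^1(\partial\Omega)$.
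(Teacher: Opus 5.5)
Your proposal is correct and follows the paper's proof essentially step for step. You choose $j$ with $\abs{k_j}=\norm{\Bk}_\infty$ and integrate by parts $\nu+1$ times in $x_j$ on almost every slice using absolutely continuous representatives. The boundary terms with $p<\nu$ vanish by hypothesis, you bound the remaining trace and $L^1$ terms, and $\abs{k_j}\ge d^{-1/2}\norm{\Bk}$ yields $B_u=d^{(\nu+1)/2}\max_j C_j$. Your additional check that the trace norms of $\partial_{x_j}^\nu u$ are finite, via $\sup_t\abs{g(t)}\le(b-a)^{-1}\norm{g}_{L^1(a,b)}+\norm{g'}_{L^1(a,b)}$, is a small point the paper leaves implicit.
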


\begin{proof}
Let $\Bk\ne0$ and choose $j$ such that $\abs{k_j}=\norm{\Bk}_{\infty}$. Since $u\in W^{\nu+1,1}(\Omega)$, for almost every fixed $\bm x'_j\in\Omega'_j$ the functions $x_j\mapsto\partial_{x_j}^pu(\bm x)$, $0\le p\le\nu$, are absolutely continuous on $[a,b]$ after a change on a null set. Their derivatives are $x_j\mapsto\partial_{x_j}^{p+1}u(\bm x)$, and their values at $a$ and $b$ equal the traces of $\partial_{x_j}^pu$. For such $\bm x'_j$, we integrate by parts $\nu+1$ times in $x_j$:
\begin{multline}
\int_a^b u(\bm x)\ee^{-\ii k_jx_j}~\dd x_j =\sum_{p=0}^{\nu}\frac{(-1)^p}{(-\ii k_j)^{p+1}} \Bigl[\partial_{x_j}^p u(\bm x)\ee^{-\ii k_jx_j}\Bigr]_{x_j=a}^{x_j=b}\\
+\frac{(-1)^{\nu+1}}{(-\ii k_j)^{\nu+1}} \int_a^b\partial_{x_j}^{\nu+1}u(\bm x)\ee^{-\ii k_jx_j}~\dd x_j.
\end{multline}
The boundary terms with $p<\nu$ vanish by \eqref{eq:trace-vanish}. Taking absolute values of the remaining terms and integrating over $\Omega'_j$ gives
\begin{multline}
\abs{\wh u(\Bk)} \le \frac{(2\pi)^{-d/2}}{\abs{k_j}^{\nu+1}} \left(\int_{\Omega'_j} \abs{\partial_{x_j}^{\nu}u(b,\bm x'_j)} +\abs{\partial_{x_j}^{\nu}u(a,\bm x'_j)}~\dd\bm x'_j +\norm{\partial_{x_j}^{\nu+1}u}_{L^1(\Omega)} \right)\\
\le C_u\abs{k_j}^{-(\nu+1)},
\end{multline}
where $C_u$ is $(2\pi)^{-d/2}$ times the maximum over $j=1,\ldots,d$ of the expression in parentheses. Since $\abs{k_j}=\norm{\Bk}_{\infty}\ge d^{-1/2}\norm{\Bk}$, we have
\begin{equation}
\abs{\wh u(\Bk)} \le C_u d^{(\nu+1)/2}\norm{\Bk}^{-(\nu+1)}.
\end{equation}
Setting $B_u=C_u d^{(\nu+1)/2}$ proves \eqref{eq:u-decay}.
\end{proof}

The next proposition uses basis-function decay to bound the capacity \eqref{eq:capacity} and the change of high-frequency modes uniformly in the training time.

\begin{proposition}\label{prop:capacity}
Fix $\nu\in\N_0$ and a set $\mathcal A\subset\Lambda\setminus\{0\}$. Assume that
\begin{equation}\label{eq:basis-decay}
\abs{(\phi_s)_{\Bk}}\le B_\phi\norm{\Bk}^{-(\nu+1)}, \qquad s=1,\ldots,n,\quad \Bk\in\mathcal A,
\end{equation}
where $B_\phi>0$ is independent of $\Bk$. Then
\begin{equation}\label{eq:capacity-bound}
\mathcal K_{\Bk\Bk}\le nB_\phi^2\norm{\Bk}^{-2(\nu+1)},\qquad \ell_{\Bk}\le\frac{nB_\phi^2}{\lambda_{n}}\norm{\Bk}^{-2(\nu+1)},\qquad \Bk\in\mathcal A.
\end{equation}
Moreover, if $h^{(0)}=0$ and $0<\eta<2/\lambda_{1}$, then, for every $t\in\N_0$,
\begin{equation}\label{eq:capacity-decay}
\abs{e_{\Bk}^{(t)}-e_{\Bk}^{(0)}}\le2\norm{f}_{L^2(\Omega)}B_\phi\sqrt{\frac{n}{\lambda_{n}}}\,\norm{\Bk}^{-(\nu+1)},\qquad \Bk\in\mathcal A.
\end{equation}
\end{proposition}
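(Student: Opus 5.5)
The bound on $\mathcal K_{\Bk\Bk}$ follows directly from \eqref{eq:Skk-spectral}: $\mathcal K_{\Bk\Bk}=\sum_{s=1}^n\abs{(\phi_s)_{\Bk}}^2\le nB_\phi^2\norm{\Bk}^{-2(\nu+1)}$ for $\Bk\in\mathcal A$, using \eqref{eq:basis-decay} termwise. For the capacity, I will use the lower half of the sandwich $\lambda_n\ell_{\Bk}\le\mathcal K_{\Bk\Bk}$, which comes from \eqref{eq:Skk-spectral} because $\lambda_j\ge\lambda_n>0$ for every $j$. Dividing by $\lambda_n$ then gives the stated bound on $\ell_{\Bk}$.

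For \eqref{eq:capacity-decay}, I will avoid summing the pointwise recursion \eqref{eq:point-recursion} in time, since that would accumulate mixing terms. Instead I will use that $e^{(t)}-e^{(0)}=(h^{(t)}-h^{(0)})^\Lambda=(h^{(t)})^\Lambda$, and that $h^{(t)}\in V$. The difference therefore lies in $\ran\mathcal S$. By the proof of Theorem~\ref{thm:linear-convergence}, it equals $P_Re^{(t)}-P_Re^{(0)}$, because $P_N$ leaves $e^{(t)}$ unchanged. Expanding in the orthonormal basis $\{q_j^\Lambda\}$ via \eqref{eq:projection-proof} gives
\begin{equation}
e^{(t)}_{\Bk}-e^{(0)}_{\Bk}=\sum_{j=1}^n\bigl((1-\eta\lambda_j)^t-1\bigr)\ipL{e^{(0)}}{q_j^\Lambda}(q_j)_{\Bk}.
\end{equation}

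The step-size condition $0<\eta<2/\lambda_1$ gives $\abs{(1-\eta\lambda_j)^t-1}\le2$ for all $t$. Applying Cauchy--Schwarz in $j$ yields
\begin{equation}
\abs{e^{(t)}_{\Bk}-e^{(0)}_{\Bk}}\le2\Bigl(\sum_j\abs{\ipL{e^{(0)}}{q_j^\Lambda}}^2\Bigr)^{1/2}\Bigl(\sum_j\abs{(q_j)_{\Bk}}^2\Bigr)^{1/2}\le2\norm{e^{(0)}}_{\ell^2}\,\ell_{\Bk}^{1/2}.
\end{equation}
Here the first factor is bounded by Bessel's inequality, and the second factor is $\ell_{\Bk}^{1/2}$ by \eqref{eq:capacity}. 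With $h^{(0)}=0$ we have $e^{(0)}=-f^\Lambda$, so Parseval gives $\norm{e^{(0)}}=\norm{f}_{L^2(\Omega)}$. Substituting the capacity bound $\ell_{\Bk}^{1/2}\le B_\phi\sqrt{n/\lambda_n}\,\norm{\Bk}^{-(\nu+1)}$ then produces \eqref{eq:capacity-decay}.

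The only delicate point is obtaining a bound that is uniform in $t$. Rather than controlling the frequency-mixing term $\mathcal M_{\Bk}e^{(t)}$ at each step, I will bound the spectral multipliers $(1-\eta\lambda_j)^t-1$ by the constant $2$. This is exactly where the step-size restriction $\eta<2/\lambda_1$ enters. Beyond that, the argument only uses the orthonormality of $\{q_j^\Lambda\}$ established in Proposition~\ref{prop01}.
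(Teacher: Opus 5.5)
Your proposal is correct and takes essentially the paper's approach. The capacity bounds are obtained the same way. For the movement bound, the paper proves $\norm{h^{(t)}}_{L^2(\Omega)}\le 2\norm{P_Vf}_{L^2(\Omega)}\le 2\norm{f}_{L^2(\Omega)}$ from the contraction $\norm{P_Re^{(t)}}\le\norm{P_Re^{(0)}}$ and then applies Cauchy--Schwarz to $\ipO{h^{(t)}}{P_V\chi_{\Bk}}$; your coordinatewise Cauchy--Schwarz in the $q_j$ expansion, with multipliers bounded by $2$, is the same argument written in coordinates.
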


\begin{proof}
The first inequality in \eqref{eq:capacity-bound} follows from \eqref{eq:Skk-spectral} and \eqref{eq:basis-decay}. The second follows from $\ell_{\Bk}\le \mathcal K_{\Bk\Bk}/\lambda_{n}$.

By the proof of Theorem~\ref{thm:linear-convergence}, $(h^{(t)}-P_Vf)^\Lambda=P_Re^{(t)}$. According to \eqref{eq:projection-proof} and the orthonormality of $q_j^{\Lambda}$,
\begin{equation}
\begin{aligned}
\norm{P_Re^{(t)}}_{\ell^2(\Lambda)}
&=\norm{\sum_{j=1}^n(1-\eta\lambda_j)^t\ipL{e^{(0)}}{q_j^\Lambda}q_j^\Lambda}_{\ell^2(\Lambda)}\\
&=\left(\sum_{j=1}^n\abs{1-\eta\lambda_j}^{2t}\abs{\ipL{e^{(0)}}{q_j^\Lambda}}^2\right)^{1/2}\\
&\le\left(\sum_{j=1}^n\abs{\ipL{e^{(0)}}{q_j^\Lambda}}^2\right)^{1/2}\\
&=\norm{P_Re^{(0)}}_{\ell^2(\Lambda)}.
\end{aligned}
\end{equation}
Hence, by Parseval's identity \eqref{eq:parseval} and $h^{(0)}=0$,
\begin{equation}
\begin{aligned}
\norm{h^{(t)}-P_Vf}_{L^2(\Omega)}
&=\left(\sum_{\Bk\in\Lambda}\abs{\ipO{h^{(t)}-P_Vf}{\chi_{\Bk}}}^2\right)^{1/2}\\
&=\norm{(h^{(t)}-P_Vf)^\Lambda}_{\ell^2(\Lambda)}
=\norm{P_Re^{(t)}}_{\ell^2(\Lambda)}\\
&\le\norm{P_Re^{(0)}}_{\ell^2(\Lambda)}
=\norm{(h^{(0)}-P_Vf)^\Lambda}_{\ell^2(\Lambda)}\\
&=\norm{h^{(0)}-P_Vf}_{L^2(\Omega)}
=\norm{P_Vf}_{L^2(\Omega)}.
\end{aligned}
\end{equation}
Consequently,
\begin{equation}
\norm{h^{(t)}}_{L^2(\Omega)}\le\norm{P_Vf}_{L^2(\Omega)}+\norm{h^{(t)}-P_Vf}_{L^2(\Omega)}\le2\norm{P_Vf}_{L^2(\Omega)}\le2\norm{f}_{L^2(\Omega)},\qquad t\in\N_0.
\end{equation}
Since $h^{(0)}=0$, we have
\begin{equation}
\begin{aligned}
\abs{e_{\Bk}^{(t)}-e_{\Bk}^{(0)}}
&=\abs{\ipO{h^{(t)}}{\chi_{\Bk}}}
=\abs{\ipO{h^{(t)}}{P_V\chi_{\Bk}}}\\
&\le
\norm{h^{(t)}}_{L^2(\Omega)}\norm{P_V\chi_{\Bk}}_{L^2(\Omega)}\\
&=\norm{h^{(t)}}_{L^2(\Omega)}\sqrt{\ell_{\Bk}}\\
&\le2\norm{f}_{L^2(\Omega)}B_\phi\sqrt{\frac{n}{\lambda_{n}}}\,\norm{\Bk}^{-(\nu+1)},
\end{aligned}
\end{equation}
This proves \eqref{eq:capacity-decay}.
\end{proof}

The spectral condition \eqref{eq:basis-decay} admit many familiar examples. To illustrate the basis and target assumptions, consider $\Omega=[0,1]$ and $\Lambda=2\pi\mathbb Z$. Any fixed finite family of polynomials, including Legendre and Chebyshev polynomials, or fixed Gaussian functions restricted to this interval satisfies \eqref{eq:hf-basis-decay} with $\nu=0$, with a constant depending on the chosen family.

Proposition~\ref{prop:capacity} gives a bound uniform in $t$, including the limit $t\to\infty$, without further regularity assumptions on the target $f$. The upper bound in \eqref{eq:capacity-decay} is independent of $t$ and depends on $\Bk$ through $\norm{\Bk}^{-(\nu+1)}$, but it does not determine which frequency will be learned first. Moreover, the potential large factor $(n/\lambda_{n})^{1/2}$ can make the bound large at moderate frequencies. 

\subsection{Spectral representation of the attainable error}\label{sec:learning-time}

The next proposition gives a closed-form error for each frequency $\Bk$. Recall that $e^{(\infty)}$ given by \eqref{eq:e-infinity} is best error we can obtain when using $V$ as the approximation space, and $e^{(t)}-e^{(\infty)}$ measures the gap between the current approximator and the best one. 
 
\begin{proposition}\label{prop:per-frequency}
Let $0<\eta<2/\lambda_{1}$. Then, for every $t\in\N_0$ and every $\Bk\in\Lambda$,
\begin{equation}\label{eq:per-frequency}
e_{\Bk}^{(t)}-e_{\Bk}^{(\infty)}=\sum_{j=1}^{n}(1-\eta\lambda_j)^t\ipO{h^{(0)}-f}{q_j}(q_j)_{\Bk}.
\end{equation}
\end{proposition}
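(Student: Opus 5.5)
The plan is to read the formula off the spectral form of the iteration already obtained in the proof of Theorem~\ref{thm:linear-convergence}, and then take the $\Bk$-th lattice coordinate. By \eqref{eq04}, $e^{(t)}=(I-\eta\mathcal S)^te^{(0)}$. Split $e^{(t)}=P_Re^{(t)}+P_Ne^{(t)}$. The null-space part is frozen: $P_Ne^{(t)}=P_Ne^{(0)}$ by \eqref{eq:convergence-projection}. Since $\eta<2/\lambda_1$, the range part tends to zero, so $e^{(\infty)}=P_Ne^{(0)}$. This is consistent with \eqref{eq:e-infinity}, because $h^{(0)}\in V$. Hence
\begin{equation*}
e^{(t)}-e^{(\infty)}=P_Re^{(t)}=\sum_{j=1}^n(1-\eta\lambda_j)^t\ipL{e^{(0)}}{q_j^\Lambda}q_j^\Lambda,
\end{equation*}
which is exactly \eqref{eq:projection-proof}. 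This identity is valid for every $t\in\N_0$. No limit is taken in it, so the step-size condition is used only to identify $e^{(\infty)}$.

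Next I convert the inner product back to $L^2(\Omega)$. Since $e^{(0)}=(h^{(0)}-f)^\Lambda$, Parseval's identity \eqref{eq:parseval} gives $\ipL{e^{(0)}}{q_j^\Lambda}=\ipO{h^{(0)}-f}{q_j}$. Evaluating the vector identity at the coordinate $\Bk\in\Lambda$ then replaces $q_j^\Lambda$ by its entry $(q_j)_{\Bk}=\ipO{q_j}{\chi_{\Bk}}$. Since the sum is finite, this evaluation is legitimate, and it yields \eqref{eq:per-frequency}.

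There is no genuine obstacle here. The only point needing care is that $e^{(\infty)}$, defined in \eqref{eq:e-infinity} through $f-P_Vf$, coincides with $P_Ne^{(0)}$ for a general initialization $h^{(0)}\in V$, not only for $h^{(0)}=0$. This follows from $P_N w^\Lambda=((I-P_V)w)^\Lambda$ and $(I-P_V)h^{(0)}=0$, as already noted in the proof of Theorem~\ref{thm:linear-convergence}.
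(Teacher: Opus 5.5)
Your proposal is correct and follows the paper's proof essentially step for step. You identify $e^{(\infty)}=P_Ne^{(0)}=P_Ne^{(t)}$, so that $e^{(t)}-e^{(\infty)}=P_Re^{(t)}$, expand this via \eqref{eq:projection-proof}, rewrite $\ipL{e^{(0)}}{q_j^\Lambda}$ as $\ipO{h^{(0)}-f}{q_j}$ by Parseval, and take the $\Bk$-th entry. Your remarks are also correct: $P_Ne^{(0)}$ matches \eqref{eq:e-infinity} for any $h^{(0)}\in V$, and the identity itself needs no limiting argument.
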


\begin{proof}
By the proof of Theorem~\ref{thm:linear-convergence}, $e^{(\infty)}=P_Ne^{(0)}=P_Ne^{(t)}$. Hence $e^{(t)}-e^{(\infty)}=P_Re^{(t)}$, and \eqref{eq:projection-proof} gives
\begin{equation}
e^{(t)}-e^{(\infty)}=\sum_{j=1}^{n}(1-\eta\lambda_j)^t\ipL{e^{(0)}}{q_j^\Lambda}q_j^\Lambda.
\end{equation}
By \eqref{eq:parseval}, $\ipL{e^{(0)}}{q_j^\Lambda}=\ipO{h^{(0)}-f}{q_j}$. Taking the entry at $\Bk$ gives \eqref{eq:per-frequency}.
\end{proof}

Formula~\eqref{eq:per-frequency} shows that the distance between $e^{(t)}_{\Bk}$ and $e^{(\infty)}_{\Bk}$ consist of two parts: $(1-\eta\lambda_j)^t$ involves the time and $(q_j)_{\Bk}$ is the $\Bk$-th Fourier coefficient of $q_j$. Thus $\Bk$ is learned slowly when the part of $\chi_{\Bk}$ in $V$ lies mainly along eigenfunctions $q_j$ with small eigenvalues $\lambda_j$.

\section{High-frequency movement and representational limits}\label{sec:high-frequency}
\subsection{Bounds on high-frequency relative changes}
We bound high-frequency changes in the Fourier error relative to its initial amplitude. Under suitable Fourier decay assumptions, the bounds hold at each fixed iteration and, for stable gradient descent, uniformly throughout training. The uniform bound distinguishes limited approximation capacity from slow convergence.

Let $\mathcal A\subset\{\Bk\in\Lambda:\norm{\Bk}\ge1\}$ be unbounded. For zero initialization, $e_{\Bk}^{(0)}=-f_{\Bk}$. At some frequency $\Bk$ where $f_{\Bk}\ne0$, define the relative change during the training by
\begin{equation}\label{eq:relative-movement}
\Gamma_t(\Bk)
 :=\frac{\abs{h_{\Bk}^{(t)}}}{\abs{f_{\Bk}}}=\frac{\abs{h_{\Bk}^{(t)}-h_{\Bk}^{(0)}}}{\abs{f_{\Bk}}}=\frac{\abs{e_{\Bk}^{(t)}-e_{\Bk}^{(0)}}}
 {\abs{e_{\Bk}^{(0)}}}.
\end{equation}
This quantity measures how far the current learner is from initialization at frequency $\Bk$. Intuitively, small $\Gamma_t(\Bk)$ means the learner is trained slowly at this frequency. From the third equality in \eqref{eq:relative-movement}, small $\Gamma_t(\Bk)$ also means that the Fourier error remains close to its initial value on this relative scale, but it does not assert that the mode is close to its attainable limit. Moreover, the special case  $\Gamma_t(\Bk)=1$ means equal amplitudes of $h_{\Bk}^{(t)}$ and $f_{\Bk}$, but does not ensure the accurate learning, i.e., $h_{\Bk}^{(t)}=f_{\Bk}$.

\begin{theorem}\label{thm:linear-fp}
Let $\nu\in\N_0$ and suppose that, for some $B_\phi>0$,
\begin{equation}\label{eq:hf-basis-decay}
\abs{(\phi_s)_{\Bk}}\le B_\phi\norm{\Bk}^{-(\nu+1)},
\qquad s=1,\ldots,n,\quad \Bk\in\mathcal A.
\end{equation}
Assume also that there are $0<\alpha<\nu+1$ and $0<c_f\le C_f$ such that
\begin{equation}\label{eq:hf-target-two-sided}
c_f\norm{\Bk}^{-\alpha}\le\abs{f_{\Bk}}
\le C_f\norm{\Bk}^{-\alpha},\qquad \Bk\in\mathcal A.
\end{equation}
Let $h^{(0)}=0$ and $\eta>0$. For every fixed integer $t\ge1$, there are constants $C_{1,t},C_{2,t}>0$, independent of $\Bk$, such that
\begin{equation}\label{eq:linear-fp}
\Gamma_t(\Bk)
\le C_{1,t}\norm{\Bk}^{-2(\nu+1)}
   +C_{2,t}\norm{\Bk}^{\alpha-(\nu+1)},
\qquad \Bk\in\mathcal A.
\end{equation}
Consequently, $\Gamma_t(\Bk)\to0$ as $\norm{\Bk}\to\infty$ for every fixed $t$.
\end{theorem}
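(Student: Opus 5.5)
The plan is to work directly with the exact recursion \eqref{eq:point-recursion} at a fixed frequency $\Bk\in\mathcal A$ and to unroll it for a fixed number $t$ of steps. Since $h^{(0)}=0$, we have $e^{(0)}=-f^\Lambda$ and $h^{(t)}_{\Bk}=e^{(t)}_{\Bk}-e^{(0)}_{\Bk}$. Iterating \eqref{eq:point-recursion} gives
\begin{equation*}
h^{(t)}_{\Bk}=\bigl[(1-\eta\mathcal K_{\Bk\Bk})^t-1\bigr]e^{(0)}_{\Bk}-\eta\sum_{\tau=0}^{t-1}(1-\eta\mathcal K_{\Bk\Bk})^{t-1-\tau}\mathcal M_{\Bk}e^{(\tau)}.
\end{equation*}
I will bound the two pieces separately. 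After division by $\abs{f_{\Bk}}$, they produce the two terms in \eqref{eq:linear-fp}.

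For the same-frequency piece, I will use $0\le\mathcal K_{\Bk\Bk}\le nB_\phi^2\norm{\Bk}^{-2(\nu+1)}\le nB_\phi^2$, which is the first bound in \eqref{eq:capacity-bound}; it holds because $\norm{\Bk}\ge1$ on $\mathcal A$. With $x=\eta\mathcal K_{\Bk\Bk}$, the elementary estimate
\begin{equation*}
\abs{(1-x)^t-1}\le x\sum_{i<t}\abs{1-x}^i\le x\,t\,(1+\eta nB_\phi^2)^{t-1}
\end{equation*}
holds for every $\eta>0$, with no stability assumption needed. Dividing by $\abs{e^{(0)}_{\Bk}}=\abs{f_{\Bk}}$ gives the contribution $C_{1,t}\norm{\Bk}^{-2(\nu+1)}$ with $C_{1,t}=t\eta nB_\phi^2(1+\eta nB_\phi^2)^{t-1}$.

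For the mixing piece, I will write $\mathcal M_{\Bk}e^{(\tau)}=\sum_s(\phi_s)_{\Bk}\bigl(\ipL{e^{(\tau)}}{\phi_s^\Lambda}-e^{(\tau)}_{\Bk}\overline{(\phi_s)_{\Bk}}\bigr)$ using \eqref{eq03}, and bound it by
\begin{equation*}
\abs{\mathcal M_{\Bk}e^{(\tau)}}\le\sum_s\abs{(\phi_s)_{\Bk}}\Bigl(\norm{e^{(\tau)}}_{\ell^2}\norm{\phi_s}+\abs{(\phi_s)_{\Bk}}\,\norm{e^{(\tau)}}_{\ell^2}\Bigr)\le C\,\norm{e^{(\tau)}}_{\ell^2}\,\norm{\Bk}^{-(\nu+1)}.
\end{equation*}
Here $C$ depends only on $n$, $B_\phi$ and $\max_s\norm{\phi_s}$. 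The key remaining point is that $\norm{e^{(\tau)}}_{\ell^2}$ is finite and independent of $\Bk$ for each fixed $\tau$. Theorem~\ref{thm01} gives this at once:
\begin{equation*}
\norm{e^{(\tau)}}\le\norm{I-\eta\mathcal S}^\tau\norm{f}\le(1+\eta\lambda_1)^\tau\norm{f},
\end{equation*}
since $\mathcal S$ is bounded and positive semidefinite with norm $\lambda_1$ by Proposition~\ref{prop01}. Combining this with the bound $\abs{1-\eta\mathcal K_{\Bk\Bk}}^{i}\le(1+\eta nB_\phi^2)^{i}$ and the lower bound $\abs{f_{\Bk}}\ge c_f\norm{\Bk}^{-\alpha}$ from \eqref{eq:hf-target-two-sided} yields the term $C_{2,t}\norm{\Bk}^{\alpha-(\nu+1)}$, with $C_{2,t}$ explicit in $t,\eta,n,B_\phi,\lambda_1,\norm{f},c_f$. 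The upper bound $C_f$ is not needed here. Since $\alpha<\nu+1$, both exponents are negative and $\Gamma_t(\Bk)\to0$ along $\mathcal A$.

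The main obstacle is the mixing term. The decay at $\Bk$ must come solely from the factor $(\phi_s)_{\Bk}$, while the inner products with $e^{(\tau)}$ must be controlled uniformly in $\Bk$; this is only possible because $t$ is fixed, which permits the crude growth factor $(1+\eta\lambda_1)^\tau$. As a sanity check, a shortcut gives the second term alone: $\abs{h^{(t)}_{\Bk}}\le\sum_s\abs{c^{(t)}_s}B_\phi\norm{\Bk}^{-(\nu+1)}$, where $\bm c^{(t)}$ is fixed for fixed $t$. I will nonetheless present the split above, because it matches the two-term structure of \eqref{eq:linear-fp} and isolates the same-frequency damping from frequency mixing.
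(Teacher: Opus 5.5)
Your proposal is correct and follows the paper's proof: the same Duhamel unrolling of \eqref{eq:point-recursion}, the same bound $\abs{(1-\eta\mathcal K_{\Bk\Bk})^t-1}\le\eta t(1+\eta nB_\phi^2)^{t-1}nB_\phi^2\norm{\Bk}^{-2(\nu+1)}$, and a mixing bound of order $\norm{\Bk}^{-(\nu+1)}$ multiplied by a $\Bk$-independent bound on $\norm{e^{(\tau)}}$ for fixed $t$. The only difference is cosmetic: the paper controls the mixing functional through $\mathcal S^2\le\lambda_1\mathcal S$, giving $\norm{\mathcal M_{\Bk}}\le(\lambda_1\mathcal K_{\Bk\Bk})^{1/2}$, whereas you expand via \eqref{eq03} and apply Cauchy--Schwarz with $\max_s\norm{\phi_s}$; both give the same rate.
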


\begin{proof}
Write $p=\nu+1$. By \eqref{eq:hf-basis-decay},
\begin{equation}\label{eq:hf-diagonal}
0\le\mathcal K_{\Bk\Bk}
=\sum_{s=1}^n\abs{(\phi_s)_{\Bk}}^2
\le nB_\phi^2\norm{\Bk}^{-2p}.
\end{equation}
The off-diagonal functional $\mathcal M_{\Bk}$ in \eqref{eq:point-recursion} satisfies
\begin{equation}\label{eq:hf-mixing-row}
\norm{\mathcal M_{\Bk}}^2
=\sum_{\bm l\ne\Bk}\abs{\mathcal K_{\Bk\bm l}}^2
\le\sum_{\bm l\in\Lambda}\abs{\mathcal K_{\Bk\bm l}}^2
\le\lambda_1\mathcal K_{\Bk\Bk}.
\end{equation}
Here the norm is from $\ell^2(\Lambda)$ to $\C$. The last inequality follows from $\mathcal S^2\le\lambda_1\mathcal S$, applied to the unit coordinate vector at $\Bk$. Hence, with
\begin{equation}
E_t=\max_{0\le q<t}\norm{e^{(q)}}_{\ell^2(\Lambda)},
\end{equation}
we have
\begin{equation}\label{eq:hf-mixing}
\abs{\mathcal M_{\Bk}e^{(q)}}
\le B_\phi\sqrt{n\lambda_1}\,E_t\norm{\Bk}^{-p},
\qquad 0\le q<t.
\end{equation}
The number $E_t$ is finite because $\mathcal S$ is bounded and $t$ is fixed; no stability restriction on $\eta$ is needed here.

Set $a_{\Bk}=1-\eta\mathcal K_{\Bk\Bk}$ and $A_\eta=1+\eta nB_\phi^2$. Since $\norm{\Bk}\ge1$, we have $\abs{a_{\Bk}}\le A_\eta$. Iterating \eqref{eq:point-recursion} gives
\begin{equation}\label{eq:hf-duhamel}
e_{\Bk}^{(t)}
=a_{\Bk}^t e_{\Bk}^{(0)}
-\eta\sum_{q=0}^{t-1}a_{\Bk}^{t-1-q}\mathcal M_{\Bk}e^{(q)}.
\end{equation}
The identity $a_{\Bk}^t-1=(a_{\Bk}-1)\sum_{q=0}^{t-1}a_{\Bk}^q$ and \eqref{eq:hf-diagonal} imply
\begin{equation}
\abs{a_{\Bk}^t-1}
\le\eta tA_\eta^{t-1}nB_\phi^2\norm{\Bk}^{-2p}.
\end{equation}
Using \eqref{eq:hf-mixing} in the sum in \eqref{eq:hf-duhamel}, and dividing by $\abs{f_{\Bk}}\ge c_f\norm{\Bk}^{-\alpha}$, therefore yields
\begin{equation}\label{eq:hf-complex-movement}
\begin{aligned}
\Gamma_t(\Bk)
\le\eta tA_\eta^{t-1}nB_\phi^2\norm{\Bk}^{-2p}
+\frac{\eta tA_\eta^{t-1}B_\phi\sqrt{n\lambda_1}\,E_t}{c_f}
\norm{\Bk}^{\alpha-p}.
\end{aligned}
\end{equation}
This proves \eqref{eq:linear-fp}, with the displayed coefficients as $C_{1,t}$ and $C_{2,t}$. Both powers tend to zero because $0<\alpha<p$.
\end{proof}

For the target condition \eqref{eq:hf-target-two-sided}, examples include smooth functions with unequal endpoint values, such as $f(x)=x$ ($\alpha=1$); smooth functions with matching endpoint values but unequal endpoint derivatives, such as $f(x)=x(1-x)$ ($\alpha=2$); functions with an interior jump, such as $f(x)=\mathbf 1_{(0,1/2)}(x)$ ($\alpha=1$); and functions with a weak endpoint singularity, such as $f(x)=x^{-\beta}$, $0<\beta<1/2$ ($\alpha=1-\beta$). For these four examples, one may take all sufficiently large nonzero lattice frequencies, except that the jump example requires $\mathcal A=\{2\pi m:m\in\mathbb Z\text{ is odd}\}$ to exclude zero coefficients. The condition $\alpha<\nu+1$ must also hold: the first and third examples require $\nu\ge1$, the second requires $\nu\ge2$, and the fourth is compatible with $\nu=0$.

The estimates in Theorem \ref{thm:linear-fp} should be interpreted as an asymptotic high-frequency form of the frequency principle, not as a monotone ordering of all Fourier modes. In other words, for fixed time $t$, given some frequency $\Bk_0 \in \mathcal A$ with $\Gamma_t(\Bk_0)\neq 0$, there exists $K >0$ such that $\Gamma_t(\Bk)<\Gamma_t(\Bk_0)$ for all $\norm{\Bk}>K$; namely, for all sufficiently large frequencies, their relative changes must be smaller than that of $\Bk_0$. Next, we prove a stronger result uniformly for $t$.

\begin{corollary}\label{cor:uniform-relative-movement}
Assume \eqref{eq:hf-basis-decay} and the lower bound
\begin{equation}\label{lower bound}
\abs{f_{\Bk}}\ge c_f\norm{\Bk}^{-\alpha},\qquad \Bk\in\mathcal A,
\end{equation}
where $c_f>0$ and $0<\alpha<\nu+1$. Let $h^{(0)}=0$ and $0<\eta<2/\lambda_1$. Then
\begin{equation}\label{eq:uniform-relative-movement}
\sup_{t\in\N_0}\Gamma_t(\Bk)
\le\frac{2\norm{P_Vf}_{L^2(\Omega)}B_\phi}{c_f}
\sqrt{\frac{n}{\lambda_n}}\,
\norm{\Bk}^{\alpha-(\nu+1)},
\qquad \Bk\in\mathcal A.
\end{equation}
If $0<\eta\le1/\lambda_1$, the factor $2$ can be replaced by $1$. Consequently, $\sup_{t\in\N_0}\Gamma_t(\Bk)\to0$ as $\norm{\Bk}\to\infty$.
\end{corollary}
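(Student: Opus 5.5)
The bound follows from the second half of the proof of Proposition~\ref{prop:capacity}, with $\norm{f}$ replaced by the sharper $\norm{P_Vf}$, and then dividing by the target lower bound. Since $h^{(0)}=0$, \eqref{eq:relative-movement} gives $\Gamma_t(\Bk)=\abs{h^{(t)}_{\Bk}}/\abs{f_{\Bk}}$. As $h^{(t)}\in V$ and $P_V$ is self-adjoint,
\[
h^{(t)}_{\Bk}=\ipO{h^{(t)}}{\chi_{\Bk}}=\ipO{h^{(t)}}{P_V\chi_{\Bk}}.
\]
Cauchy--Schwarz then yields $\abs{h^{(t)}_{\Bk}}\le\norm{h^{(t)}}_{L^2(\Omega)}\sqrt{\ell_{\Bk}}$. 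By \eqref{eq:capacity-bound} of Proposition~\ref{prop:capacity}, $\sqrt{\ell_{\Bk}}\le B_\phi\sqrt{n/\lambda_n}\,\norm{\Bk}^{-(\nu+1)}$ on $\mathcal A$.

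Next we need a $t$-uniform bound on $\norm{h^{(t)}}$. The proof of Proposition~\ref{prop:capacity} already shows $\norm{h^{(t)}-P_Vf}\le\norm{P_Vf}$, which gives $\norm{h^{(t)}}\le2\norm{P_Vf}$ for $0<\eta<2/\lambda_1$. Combining this with the capacity bound and \eqref{lower bound} gives \eqref{eq:uniform-relative-movement}, uniformly in $t\in\N_0$. The case $t=0$ is trivial, since $\Gamma_0=0$.

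For the refinement with $0<\eta\le1/\lambda_1$, we expand $h^{(t)}$ exactly instead of using the triangle inequality. From \eqref{eq:projection-proof} with $e^{(0)}=-f^\Lambda$, and since $P_Re^{(t)}=(h^{(t)}-P_Vf)^\Lambda$,
\[
h^{(t)}=\sum_{j=1}^n\bigl(1-(1-\eta\lambda_j)^t\bigr)\ipO{f}{q_j}q_j .
\]
When $\eta\le1/\lambda_1$ we have $0\le1-\eta\lambda_j<1$, so every multiplier lies in $[0,1]$. Orthonormality of the $q_j$ then gives $\norm{h^{(t)}}^2\le\sum_j\abs{\ipO{f}{q_j}}^2=\norm{P_Vf}^2$, and the factor $2$ becomes $1$.

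The limit statement follows because $\alpha<\nu+1$. The only delicate point is the sign control on $1-\eta\lambda_j$ in the refined case. For $\eta$ between $1/\lambda_1$ and $2/\lambda_1$ the multiplier $1-(1-\eta\lambda_j)^t$ can reach values up to $2$, which is exactly why the general case needs the constant $2$.
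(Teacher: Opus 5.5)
Your proposal is correct and follows essentially the same route as the paper: the expansion $h^{(t)}=\sum_j\bigl(1-(1-\eta\lambda_j)^t\bigr)\ipO{f}{q_j}q_j$, the bound $\abs{h^{(t)}_{\Bk}}\le\norm{h^{(t)}}_{L^2(\Omega)}\ell_{\Bk}^{1/2}$, and the capacity estimate \eqref{eq:capacity-bound}. The one small difference is that you obtain the factor $2$ from the triangle inequality $\norm{h^{(t)}}\le\norm{P_Vf}+\norm{h^{(t)}-P_Vf}$ in the proof of Proposition~\ref{prop:capacity}, whereas the paper reads it off the multiplier bound $\abs{1-(1-\eta\lambda_j)^t}<2$; the two are equivalent.
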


\begin{proof}
The spectral representation in Proposition~\ref{prop:per-frequency}, with $h^{(0)}=0$, gives
\begin{equation}\label{ht}
h^{(t)}
=\sum_{j=1}^n\bigl[1-(1-\eta\lambda_j)^t\bigr]
\ipO{f}{q_j}q_j.
\end{equation}
Since $\abs{1-\eta\lambda_j}<1$, orthonormality implies
\begin{equation}
\norm{h^{(t)}}_{L^2(\Omega)}\le2\norm{P_Vf}_{L^2(\Omega)}.
\end{equation}
For $0<\eta\le1/\lambda_1$, every multiplier $1-(1-\eta\lambda_j)^t$ lies in $[0,1]$, so the factor $2$ is unnecessary. Moreover, $h^{(t)}\in V$ and \eqref{eq:capacity} give
\begin{equation}
\abs{h_{\Bk}^{(t)}}
=\abs{\ipO{h^{(t)}}{P_V\chi_{\Bk}}}
\le\norm{h^{(t)}}_{L^2(\Omega)}\ell_{\Bk}^{1/2}.
\end{equation}
By \eqref{eq:capacity-bound},
\begin{equation}
\ell_{\Bk}^{1/2}
\le B_\phi\sqrt{\frac{n}{\lambda_n}}\,
\norm{\Bk}^{-(\nu+1)}.
\end{equation}
Combining these estimates and dividing by $\abs{f_{\Bk}}\ge c_f\norm{\Bk}^{-\alpha}$ proves \eqref{eq:uniform-relative-movement} uniformly in $t$.
\end{proof}

Note that the estimate in Corollary~\ref{cor:uniform-relative-movement} holds uniformly over all $t\in\N_0$, whereas Theorem~\ref{thm:linear-fp} gives an estimate for each fixed $t$. The uniform estimate shows that $\Gamma_t(\Bk)$ remains uniformly small for $t$ at sufficiently high frequencies $\Bk\in\mathcal A$. As $t\to\infty$, the approximation converges to $P_Vf$, and the limiting ratio $\abs{(P_Vf)_{\Bk}}/\abs{f_{\Bk}}$ satisfies the same bound. The term $\sup_{t\in\N_0}\Gamma_t(\Bk)$ can be viewed as the maximal relative change of frequency $k$ throughout training. Corollary \ref{cor:uniform-relative-movement} implies that given a time $t_0\ge1$ and a frequency $\Bk_0\in\mathcal A$ with $\Gamma_{t_0}(\Bk_0)>0$, there exists $K>0$, independent of $t$, such that $\Gamma_t(\Bk)<\Gamma_{t_0}(\Bk_0)$ for all $t\in\N_0$ and all $\Bk\in\mathcal A$ with $\norm{\Bk}>K$. In other words, high-frequency errors remain nearly unchanged throughout training. 

These statements concern a fixed finite linear class. The factor $B_\phi\sqrt{n/\lambda_n}$ in \eqref{eq:uniform-relative-movement} may grow when the class is enlarged or its Gram matrix becomes ill-conditioned. The estimate is consequently not uniform over families of increasing dimension without additional assumptions.

The estimates above show that $\Gamma_t(\Bk)$ can remain small even after convergence to $P_Vf$, so small relative change alone does not necessarily imply slow learning. In the next section, we compare how quickly $h_{\Bk}^{(t)}$ approaches $(P_Vf)_{\Bk}$ at different frequencies.

\subsection{Numerical experiments}
\subsubsection{Legendre bases}
We use the orthonormal tensor product Legendre basis on $\Omega_d=[-1,1]^d$, for $d=1,2,3$,
\begin{equation}\label{eq:experiment-legendre-basis}
\phi_{\bm p}(\bm x)=\prod_{j=1}^d\sqrt{\frac{2p_j+1}{2}}P_{p_j}(x_j), \qquad \bm p\in\{0,\ldots,M_d-1\}^d,
\end{equation}
where $P_p$ is the Legendre polynomial of degree $p$. We take $(M_1,M_2,M_3)=(12,10,8)$, giving $(n_1,n_2,n_3)=(12,100,512)$ basis functions, and choose the targets
\begin{align}
f_1(x_1)&=\abs{x_1}^{-0.30},\label{eq:f-1d}\\
f_2(x_1,x_2)&=\abs{x_1}^{-0.35}\abs{x_2-0.27}^{-0.15},\label{eq:f-2d}\\
f_3(x_1,x_2,x_3)&=\abs{x_1}^{-0.40}\abs{x_2-0.27}^{-0.15}\abs{x_3+0.31}^{-0.20}.\label{eq:f-3d}
\end{align}
We apply full batch gradient descent from zero initialization with $\eta=0.01$ and record $t=1,5,20,100,500,2000$. The integrals are evaluated by 1800-point Gauss--Legendre quadrature after a change of variables removes the target singularities. Along the frequency ray
\begin{equation}\label{eq:experiment-frequency-ray}
\bm\omega_k=\bigl(\pi(k+\tfrac12),0,\ldots,0\bigr),\qquad k\ge1,
\end{equation}
we sample $k=1,\ldots,K_d$, with $(K_1,K_2,K_3)=(180,200,220)$. The low- and high-frequency bands are $k\le\lfloor0.15K_d\rfloor$ and $k\ge\lceil0.75K_d\rceil$, respectively.

For these finite Legendre classes, Lemma~\ref{lem:trace-decay} gives the basis decay in \eqref{eq:hf-basis-decay} with $\nu=0$. Along \eqref{eq:experiment-frequency-ray}, the target spectra satisfy
\begin{equation}\label{eq:experiment-target-decay}
c_d\norm{\bm\omega_k}^{-\beta_d}\le\abs{\wh{f_d}(\bm\omega_k)}\le C_d\norm{\bm\omega_k}^{-\beta_d},\qquad (\beta_1,\beta_2,\beta_3)=(0.70,0.65,0.60),
\end{equation}
for all sufficiently large $k$, with $0<c_d\le C_d$. Substituting these exponents into Theorem~\ref{thm:linear-fp} gives
\begin{equation}\label{eq:experiment-linear-prediction}
\Gamma_t^{(d)}(\bm\omega_k)\le C_{1,t}^{(d)}\norm{\bm\omega_k}^{-2}+C_{2,t}^{(d)}\norm{\bm\omega_k}^{\beta_d-1}=O_t\!\left(k^{-2}+k^{\beta_d-1}\right).
\end{equation}
Thus the predicted decay rates are $O_t(k^{-0.30})$, $O_t(k^{-0.35})$, and $O_t(k^{-0.40})$ for $d=1,2,3$, respectively.

Figure~\ref{fig:linear-legendre-gamma} shows that the relative movement increases with $t$, while its high-frequency envelope decreases with $k$. At $t=2000$, the ratios of the mean movement in the low-frequency band to that in the high-frequency band are $4.14$, $2.32$, and $6.04$ for $d=1,2,3$, respectively. These results agree with \eqref{eq:experiment-linear-prediction}: high-frequency movement is smaller on average, with local oscillations across the sampled frequencies.
\begin{figure}[ht]
\centering
\includegraphics[width=.96\textwidth]{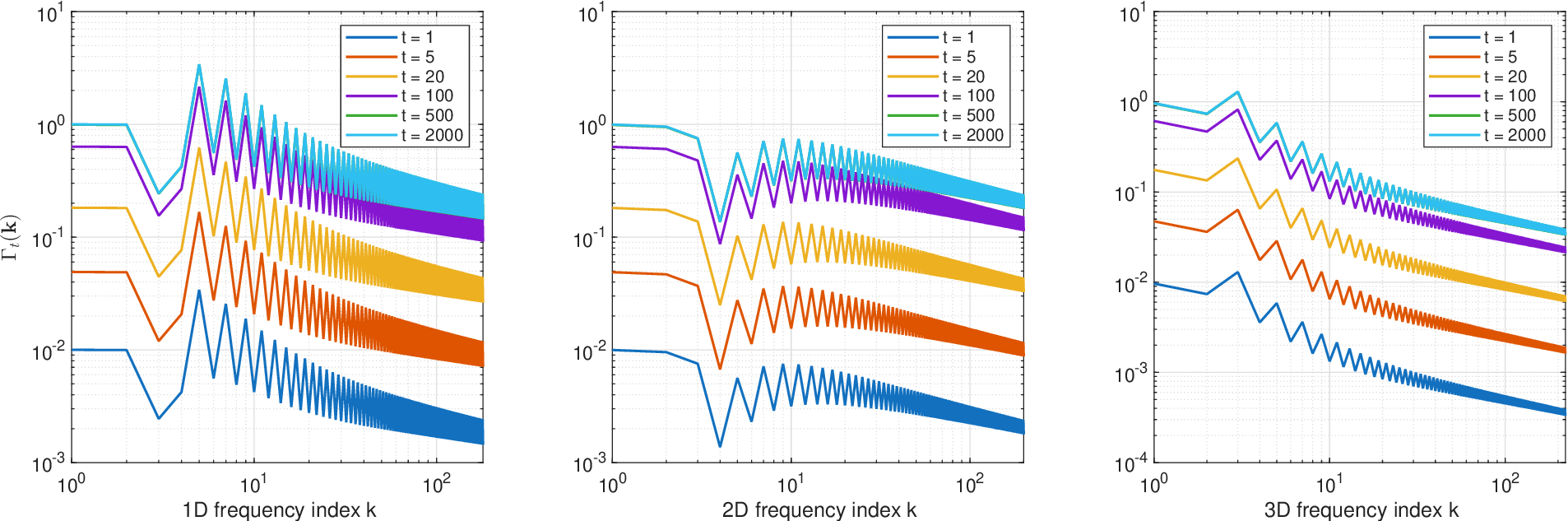}
\caption{Pointwise relative Fourier movement for the one-, two-, and three-dimensional tensor product Legendre approximation classes.}
\label{fig:linear-legendre-gamma}
\end{figure}

\subsubsection{trigonometric bases}    
We use the periodic box $\mathbb T^d=[0,2\pi]^d$, for $d=1,2,3$, and the orthonormal trigonometric basis
\begin{equation}\label{eq:experiment-trigonometric-basis}
\tilde \phi_{\bm p}(\bm x)=(2\pi)^{-d/2}\ee^{\ii\bm p\cdot\bm x},\qquad \bm p\in\mathcal I_d:=\{-20,\ldots,20\}^d.
\end{equation}
The target is
\begin{equation}
\tilde f_d=\sum_{\bm p\in\mathcal I_d}b_{\bm p}\tilde\phi_{\bm p},\qquad b_{\bm p}=(1+\norm{\bm p})^{-2}.
\end{equation}
We apply full batch gradient descent from zero initialization with $\eta=0.01$ and record $t=1,5,20,100,500,2000$. For the band averages, we use $0<r\le r_{\max}/4$ and $r\ge3r_{\max}/4$, where $r=\norm{\bm p}$ and $r_{\max}=20\sqrt d$.

Since the Gram matrix is the identity and the target belongs to the approximation class, every represented Fourier mode satisfies
\begin{equation}\label{eq:experiment-trigonometric-prediction}
e_{\bm p}^{(t)}=0.99^t e_{\bm p}^{(0)},\qquad \Gamma_t(\bm p)=1-0.99^t.
\end{equation}
At the six recorded iterations, the theoretical movement values are approximately $0.01$, $0.049010$, $0.182093$, $0.633968$, $0.993430$, and $0.9999999981$, respectively. These values are independent of frequency and dimension.

Figure~\ref{fig:linear-trigonometric-gamma} shows the computed movement averaged over unit-width radial shells. The curves are flat to machine precision in all three dimensions, with a maximum discrepancy of $1.11\times10^{-15}$ from \eqref{eq:experiment-trigonometric-prediction}. At $t=2000$, the ratio of the low- to high-frequency mean movement is $1$ in each dimension. Thus all represented frequencies have the same relative movement, and this periodic class exhibits no frequency ordering.
\begin{figure}[ht]
\centering
\includegraphics[width=.96\textwidth]{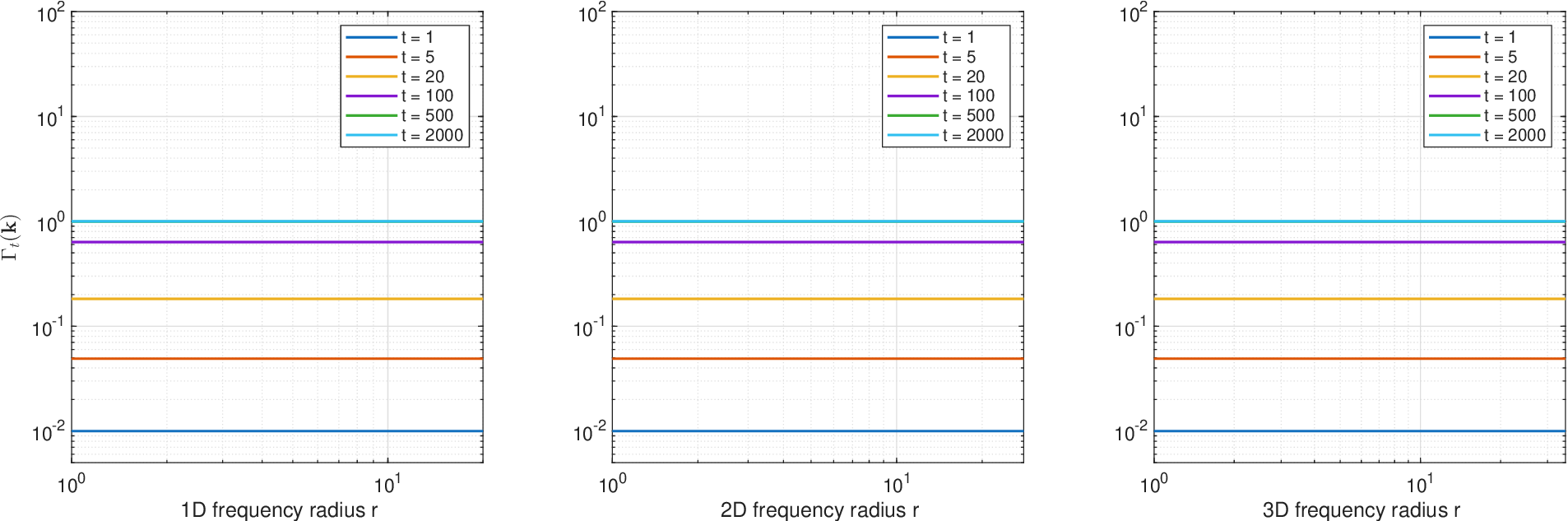}
\caption{Relative Fourier movement for the one-, two-, and three-dimensional periodic trigonometric classes.}
\label{fig:linear-trigonometric-gamma}
\end{figure} 

\section{Learning times and frequency order}\label{sec:learning}\label{sec:comparison}
Section~\ref{sec:high-frequency} bounds the output amplitudes relative to the target amplitudes at high frequencies, showing that high-frequency learner components usually change slowly under gradient descent. In this section, we derive bounds on learning times relative to the attainable limit and examine how local spectral measures and frequency mixing affect the learning order.

\subsection{Single-frequency targets}\label{sec:single-frequency}
To measure the learning progress of a single Fourier mode with $\ell_{\Bk}>0$, we first consider the single-frequency target $f=\chi_{\Bk}$. We also assume $h^{(0)}=0$, then $e_{\Bk}^{(0)}=-1$, and \eqref{eq:e-infinity} gives $e_{\Bk}^{(\infty)}=-\ipO{\chi_{\Bk}-P_V\chi_{\Bk}}{\chi_{\Bk}}=-(1-\ell_{\Bk})$. For $0<\eps<1$, we define the learning time required to reduce the relative error below $\eps$ by
\begin{equation}\label{eq:rho-T}
T_\eps(\Bk)=\min\left\{t\in\N_0:\rho_{\Bk}(t)\le\eps\right\},
\end{equation}
where $\rho_{\Bk}(t)
:=\frac{e_{\Bk}^{(t)}-e_{\Bk}^{(\infty)}}{e_{\Bk}^{(0)}-e_{\Bk}^{(\infty)}}$ is the relative error. By Proposition~\ref{prop:per-frequency} and \eqref{eq:local-measure}, the relative error $\rho_{\Bk}(t)$ takes the form
\begin{multline}\label{eq:rho-mean}
\rho_{\Bk}(t)
=\frac{\sum_{j=1}^{n}(1-\eta\lambda_j)^t\ipO{-\chi_{\Bk}}{q_j}(q_j)_{\Bk}}{-\ell_{\Bk}}\\
=\frac{1}{\ell_{\Bk}}\sum_{j=1}^{n}\abs{(q_j)_{\Bk}}^2(1-\eta\lambda_j)^t
=\sum_{j=1}^{n}\varpi_j(\Bk)(1-\eta\lambda_j)^t.
\end{multline}

Thus $\rho_{\Bk}(t)$ is the mean of the Landweber residual polynomial $(1-\eta\lambda)^t$ \cite{engl1996regularization} under the local spectral measure. For $0<\eta\le1/\lambda_{1}$, it is real and decreases monotonically from one to zero. In $\rho_{\Bk}(t)$, each decay factor $(1-\eta\lambda_j)^t$ contributes with weight $\varpi_j(\Bk)$. By Theorem \ref{thm:linear-convergence}, the gradient descent converges, and $e_{\Bk}^{(\infty)}$ characterizes the approximation error between the target $f$ and the approximation class $V$. Specifically, $e_{\Bk}^{(\infty)}=0$ if $f\in V$. So, in the numerator of $\rho_{\Bk}(t)$, subtracting $e_{\Bk}^{(\infty)}$ removes the approximation error, and therefore $\rho_{\Bk}(t)$ measures how quickly the attainable limit is reached.

For a single-frequency target, $\Gamma_t(\Bk)$ and $\rho_{\Bk}(t)$ satisfy an exact relation. If $0<\eta\le1/\lambda_1$, then
\begin{equation}\label{eq:movement-capacity-speed}
e_{\Bk}^{(t)}=-\bigl(1-\ell_{\Bk}+\ell_{\Bk}\rho_{\Bk}(t)\bigr),
\qquad
\Gamma_t(\Bk)=\ell_{\Bk}\bigl(1-\rho_{\Bk}(t)\bigr).
\end{equation}
Indeed, the first identity follows from $e_{\Bk}^{(0)}=-1$ and $e_{\Bk}^{(\infty)}=-(1-\ell_{\Bk})$. The second follows from $f_{\Bk}=1$ and $h_{\Bk}^{(t)}=e_{\Bk}^{(t)}+1=\ell_{\Bk}(1-\rho_{\Bk}(t))\ge0$, since $0\le\rho_{\Bk}(t)\le1$. Therefore, small capacity $\ell_{\Bk}$ can keep the relative change $\Gamma_t(\Bk)$ small throughout training, even when the represented part is learned quickly (i.e., $\rho_{\Bk}(t)\rightarrow0$ quickly).

The next theorem gives two-sided learning-time bounds in terms of the local spectral measure. 

\begin{theorem}\label{thm:learning-time}
Let $0<\eta\le1/\lambda_{1}$, $\Bk\in\Lambda$ with $\ell_{\Bk}>0$, and $0<\eps<1$. Then the following statements hold.
\begin{enumerate}
\item[(a)] The learning time satisfies
\begin{equation}\label{eq:jensen-time}
T_\eps(\Bk)\ge\frac{\log(1/\eps)}{-\log(1-\eta\bar\lambda_{\Bk})}\ge \left(\frac{1}{\eta\bar\lambda_{\Bk}}-1\right)\log(1/\eps).
\end{equation}
\item[(b)] For $0<\mu\le\lambda_{1}$: \\
if $\sum_{\lambda_j\le\mu}\varpi_j(\Bk)>\eps$, then
\begin{equation}\label{eq:quantile-lower}
T_\eps(\Bk)\ge\frac{\log\bigl(\eps^{-1}\sum_{\lambda_j\le\mu}\varpi_j(\Bk)\bigr)}{-\log(1-\eta\mu)};
\end{equation}
if $\sum_{\lambda_j<\mu}\varpi_j(\Bk)<\eps$, then
\begin{equation}\label{eq:harmonic-time}
T_\eps(\Bk)\le1+\frac{\sum_{\lambda_j\ge\mu}\varpi_j(\Bk)/\lambda_j}{\ee\,\eta\,\bigl(\eps-\sum_{\lambda_j<\mu}\varpi_j(\Bk)\bigr)},
\end{equation}
and moreover, if $\sum_{\lambda_j<\mu}\varpi_j(\Bk)\le c\eps$ with $0\le c<1$, then
\begin{equation}\label{eq:quantile-time}
T_\eps(\Bk)\le1+\frac{\log\bigl(1/((1-c)\eps)\bigr)}{\eta\mu}.
\end{equation}
\end{enumerate}
The lower bounds for $T_\eps(\Bk)$ in (a) and (b) are understood as zero if $\eta\bar\lambda_{\Bk}=1$ or $\eta\mu=1$, respectively.
\end{theorem}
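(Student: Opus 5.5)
Everything will be built on the representation \eqref{eq:rho-mean}, $\rho_{\Bk}(t)=\sum_j\varpi_j(\Bk)(1-\eta\lambda_j)^t$. Since $0<\eta\le1/\lambda_1$, each $x_j:=1-\eta\lambda_j$ lies in $[0,1)$. Hence $\rho_{\Bk}(t)$ is real and nonincreasing in $t$. It follows that $T_\eps(\Bk)\ge T$ as soon as $\rho_{\Bk}(T-1)>\eps$. Conversely, $T_\eps(\Bk)\le T$ whenever $\rho_{\Bk}(T)\le\eps$.

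For (a), I would apply Jensen's inequality to the convex map $\lambda\mapsto(1-\eta\lambda)^t$ on $[0,1/\eta]$ under the probability weights $\varpi_j(\Bk)$. This gives $\rho_{\Bk}(t)\ge(1-\eta\bar\lambda_{\Bk})^t$. At $t=T_\eps(\Bk)$ we have $\rho\le\eps$, so $(1-\eta\bar\lambda_{\Bk})^{T}\le\eps$. Taking logarithms yields the first inequality; the degenerate case $\eta\bar\lambda_{\Bk}=1$ is handled by the stated convention. For the second inequality I would use $-\log(1-x)\le x/(1-x)$ with $x=\eta\bar\lambda_{\Bk}$. Then $1/(-\log(1-x))\ge(1-x)/x=1/x-1$.

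For the lower bound in (b), I would drop the terms with $\lambda_j>\mu$. For $\lambda_j\le\mu$ we have $x_j\ge1-\eta\mu$, so $\rho_{\Bk}(t)\ge(1-\eta\mu)^t\sum_{\lambda_j\le\mu}\varpi_j(\Bk)$. At $t=T_\eps(\Bk)$ this gives $(1-\eta\mu)^T\sum_{\lambda_j\le\mu}\varpi_j\le\eps$, and solving for $T$ gives \eqref{eq:quantile-lower}. The upper bounds are where the real work lies. I would split the sum as
\[
\rho_{\Bk}(t)\le\sum_{\lambda_j<\mu}\varpi_j(\Bk)+\sum_{\lambda_j\ge\mu}\varpi_j(\Bk)(1-\eta\lambda_j)^t ,
\]
bounding $x_j^t\le1$ in the first sum.

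For \eqref{eq:quantile-time}, I would bound $x_j^t\le(1-\eta\mu)^t\le \ee^{-\eta\mu t}$ in the second sum. With total weight at most one there, $\rho\le c\eps+\ee^{-\eta\mu t}$. Choosing $t=\lceil\log(1/((1-c)\eps))/(\eta\mu)\rceil$ gives $\rho\le\eps$, and the ceiling accounts for the additive $1$. For the harmonic bound \eqref{eq:harmonic-time}, I expect the main obstacle to be producing the factor $1/(\ee\,\eta\lambda_j)$. The key elementary estimate is $x^t\le\ee^{-\eta\lambda t}$ together with $s\,\ee^{-s}\le \ee^{-1}$, which give $(1-\eta\lambda_j)^t\le 1/(\ee\,\eta\lambda_j t)$. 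Hence $\rho_{\Bk}(t)\le\sum_{\lambda_j<\mu}\varpi_j+\frac{1}{\ee\,\eta t}\sum_{\lambda_j\ge\mu}\varpi_j/\lambda_j$. This is at most $\eps$ once
\[
t\ge\frac{\sum_{\lambda_j\ge\mu}\varpi_j/\lambda_j}{\ee\,\eta\,(\eps-\sum_{\lambda_j<\mu}\varpi_j)} ,
\]
and taking the ceiling produces the stated $1+\cdots$. Finally, I would check that $t=0$ never satisfies $\rho\le\eps$, since $\rho(0)=1>\eps$, so the ceiling argument requires $t\ge1$; the $+1$ absorbs this.
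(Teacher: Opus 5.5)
Your proposal is correct and follows the paper's proof: Jensen's inequality for the convex map $\lambda\mapsto(1-\eta\lambda)^t$ in (a), then dropping or splitting the spectral sum at the cutoff $\mu$ in (b), with rounding up to an integer producing the additive $1$. The only difference is cosmetic: you obtain $(1-\eta\lambda_j)^t\le 1/(\ee\,\eta\lambda_j t)$ from $1-z\le \ee^{-z}$ and $s\,\ee^{-s}\le \ee^{-1}$, whereas the paper maximizes directly, using $z(1-z)^t\le t^t/(t+1)^{t+1}\le 1/(\ee t)$; both give the same constant.
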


\begin{proof}
(a) Since $\rho_{\Bk}(0)=1>\eps$, we have $T_\eps(\Bk)\ge1$. For each integer $t\ge1$, the function $g_t(\lambda)=(1-\eta\lambda)^t$ is convex on $[0,1/\eta]$. It is affine when $t=1$; for $t\ge2$, its second derivative is $t(t-1)\eta^2(1-\eta\lambda)^{t-2}\ge0$ on the interior, and it extends continuously to the endpoints. All eigenvalues $\lambda_j$ lie in $(0,\lambda_{1}]\subset(0,1/\eta]$. Since the weights $\varpi_j(\Bk)$ are nonnegative and sum to one, Jensen's inequality and \eqref{eq:speed} give
\begin{equation}
\rho_{\Bk}(t)=\sum_{j=1}^{n}\varpi_j(\Bk)g_t(\lambda_j)\ge g_t(\bar\lambda_{\Bk})=(1-\eta\bar\lambda_{\Bk})^t,\qquad t\ge1.
\end{equation}
If $\eta\bar\lambda_{\Bk}<1$, evaluating this inequality at $t=T_\eps(\Bk)$ gives $(1-\eta\bar\lambda_{\Bk})^{T_\eps(\Bk)}\le\eps$. Taking logarithms proves the first inequality in \eqref{eq:jensen-time}. The second follows from $-\log(1-z)\le z/(1-z)$ for $0\le z<1$. If $\eta\bar\lambda_{\Bk}=1$, both lower bounds in \eqref{eq:jensen-time} are zero under the stated convention, so the conclusion also holds.

(b) Fix $0<\mu\le\lambda_{1}$ and an integer $t\ge1$. Every term in \eqref{eq:rho-mean} is nonnegative. For $\lambda_j\le\mu$, we have $(1-\eta\lambda_j)^t\ge(1-\eta\mu)^t$, which gives a lower bound for $\rho_{\Bk}(t)$ after dropping the terms with $\lambda_j>\mu$. For an upper bound, we use $(1-\eta\lambda_j)^t\le1$ when $\lambda_j<\mu$ and $(1-\eta\lambda_j)^t\le(1-\eta\mu)^t$ when $\lambda_j\ge\mu$, together with $\sum_{\lambda_j\ge\mu}\varpi_j(\Bk)\le1$. Thus
\begin{equation}\label{eq:quantile}
\biggl(\sum_{\lambda_j\le\mu}\varpi_j(\Bk)\biggr)(1-\eta\mu)^t\le\rho_{\Bk}(t)\le\sum_{\lambda_j<\mu}\varpi_j(\Bk)+(1-\eta\mu)^t.
\end{equation}
If $\sum_{\lambda_j\le\mu}\varpi_j(\Bk)>\eps$ and $\eta\mu<1$, evaluating the lower bound at $t=T_\eps(\Bk)$ and using $\rho_{\Bk}(T_\eps(\Bk))\le\eps$ gives \eqref{eq:quantile-lower} after taking logarithms. If $\eta\mu=1$, the stated lower bound is zero, so \eqref{eq:quantile-lower} also holds.

To prove \eqref{eq:harmonic-time}, let $t\ge1$. For $0\le z\le1$, we have $z(1-z)^t\le t^t/(t+1)^{t+1}\le1/(\ee t)$, since $(1+1/t)^{t+1}\ge\ee$. With $z=\eta\lambda_j$, this gives $(1-\eta\lambda_j)^t\le1/(\ee\eta t\lambda_j)$. Applying this estimate to the terms with $\lambda_j\ge\mu$ and using $(1-\eta\lambda_j)^t\le1$ for the remaining terms yields
\begin{equation}\label{eq:harmonic}
\rho_{\Bk}(t)\le\sum_{\lambda_j<\mu}\varpi_j(\Bk)+\frac{1}{\ee\eta t}\sum_{\lambda_j\ge\mu}\frac{\varpi_j(\Bk)}{\lambda_j}.
\end{equation}
If $\sum_{\lambda_j<\mu}\varpi_j(\Bk)<\eps$, this implies $\rho_{\Bk}(t)\le\eps$ for every integer $t\ge1$ with
\begin{equation}
t\ge\frac{\sum_{\lambda_j\ge\mu}\varpi_j(\Bk)/\lambda_j}{\ee\,\eta\,\bigl(\eps-\sum_{\lambda_j<\mu}\varpi_j(\Bk)\bigr)}.
\end{equation}
The right-hand side is positive. Rounding it up to an integer increases it by at most one, which proves \eqref{eq:harmonic-time}.

Finally, let $0\le c<1$ and suppose that $\sum_{\lambda_j<\mu}\varpi_j(\Bk)\le c\eps$. The upper bound in \eqref{eq:quantile} gives
\begin{equation}
\rho_{\Bk}(t)\le c\eps+(1-\eta\mu)^t\le c\eps+\ee^{-\eta\mu t},\qquad t\ge1.
\end{equation}
Thus $\rho_{\Bk}(t)\le\eps$ whenever $\ee^{-\eta\mu t}\le(1-c)\eps$, which holds for every integer $t\ge\log\bigl(1/((1-c)\eps)\bigr)/(\eta\mu)$. Since $0<(1-c)\eps<1$, this threshold is positive. Rounding it up to an integer proves \eqref{eq:quantile-time}.
\end{proof}

This theorem provides upper and lower bounds for $T_\eps(\Bk)$ under certain conditions. Note that $\bar\lambda_{\Bk}\leq\lambda_{1}$, Theorem \ref{thm:learning-time}(a) means that the required time to make the error $\rho_{\Bk}(t)$ reach $\eps$ is at least $\left(\frac{1}{\eta\lambda_{1}}-1\right)\log(1/\eps)$.  In Theorem \ref{thm:learning-time}(b), the sum $\sum_{\lambda_j\le\mu}\varpi_j(\Bk)$ represents the proportion of $P_V\chi_{\Bk}$ in directions with eigenvalues at most $\mu$.
If this proportion is $\Theta(1)$ (greater than $\eps$), then it has to cost $\Theta\left(-\log\bigl(\eps^{-1}\bigr)/{\log(1-\eta\mu)}\right)$ time to make the error reach $\eps$; if $\sum_{\lambda_j<\mu}\varpi_j(\Bk)<\Theta(\eps)$ with constant less than 1, reducing the error below $\eps$ requires $\min \{\Theta\left(\frac{1}{\eta\epsilon}\right), \Theta\left(\frac{\log(1/\epsilon)}{\eta\mu}\right)\}$ time. Here we write $A=\Theta(B)$ if there exist constants $0<c_1\le c_2$, independent of the varying parameters, such that $c_1B\le A\le c_2B$.

Fix $0\le c<1$. Let $U_\eps(\Bk)$ be the minimum of the upper bounds \eqref{eq:quantile-time} and \eqref{eq:harmonic-time} over $\mu\in(0,\lambda_{1}]$ satisfying $\sum_{\lambda_j<\mu}\varpi_j(\Bk)\le c\eps$ and $\sum_{\lambda_j<\mu}\varpi_j(\Bk)<\eps$, respectively. Similarly, let $L_\eps(\Bk)$ be the maximum of the lower bound \eqref{eq:jensen-time} and the lower bounds \eqref{eq:quantile-lower} over $\mu\in(0,\lambda_{1}]$ satisfying $\sum_{\lambda_j\le\mu}\varpi_j(\Bk)>\eps$. Theorem~\ref{thm:learning-time} gives
\begin{equation}\label{eq:U-L}
L_\eps(\Bk)\le T_\eps(\Bk)\le U_\eps(\Bk).
\end{equation}

By linear independence, it is trivial that if the represented part $P_V\chi_{\Bk}$ is learned accurately, i.e., $h_{\bm c_{\Bk}}=P_V\chi_{\Bk}$, then the unique coefficient vector is $\bm c_{\Bk}=\sum_{j=1}^n\lambda_j^{-1/2}\,\overline{(q_j)_{\Bk}}\,\bm v_j$. Its norm satisfies $\norm{\bm c_{\Bk}}^2=\ell_{\Bk}\sum_{j=1}^n\varpi_j(\Bk)/\lambda_j$. We define the harmonic mean $\lambda^H_{\Bk}=(\sum_{j=1}^{n}\varpi_j(\Bk)/\lambda_j)^{-1}=\ell_{\Bk}/\norm{\bm c_{\Bk}}^2$. 

Now we compare the harmonic mean $\lambda^H_k$ and the arithmetic mean $\bar\lambda_{k'}$. Specifically, on $\Omega=[-1,1]$, we take
\begin{equation}\label{eq:two-polynomial-example}
\phi_1(x)=\sqrt{\frac32}\,x,
\qquad
\phi_2(x)=\frac{\sqrt{\delta}}{N}(x^2-d_0),
\end{equation}
where
\begin{equation}
\delta=10^{-6},
\qquad
d_0=1-\frac{8}{\pi^2},
\qquad
N=\left(\frac25-\frac{4d_0}{3}+2d_0^2\right)^{1/2}.
\end{equation}
The two basis functions are orthogonal, and the Gram eigenvalues are $1$ and $\delta$. Figure~\ref{fig:two-polynomial-spectral-means} shows $\lambda^H_k$ at $k=\pi(m+1/2)$, $m=0,\ldots,40$, and $\bar\lambda_{k'}$ at $k'=\pi(m'+1/2)$, $m'=0,\ldots,1000$. Both means decrease strictly along the positive half-integer lattice and converge to $\delta$. The harmonic mean is already close to $\delta$ at $k=3\pi/2$, whereas the arithmetic mean approaches this limit more gradually.

\begin{figure}[!ht]
\centering
\includegraphics[width=.49\textwidth]{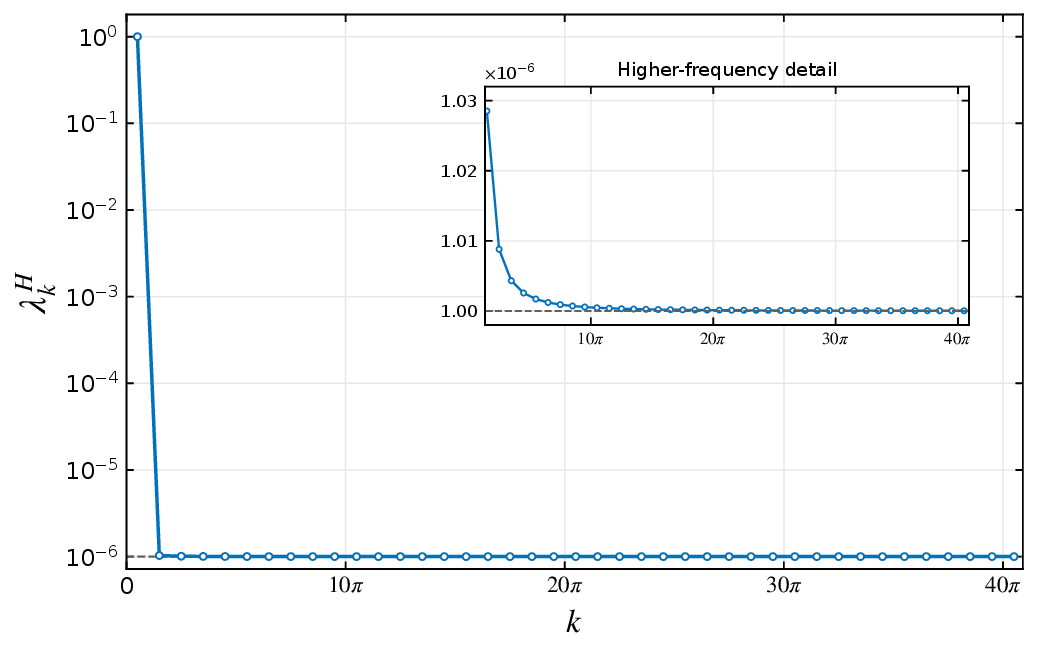}\hfill
\includegraphics[width=.49\textwidth]{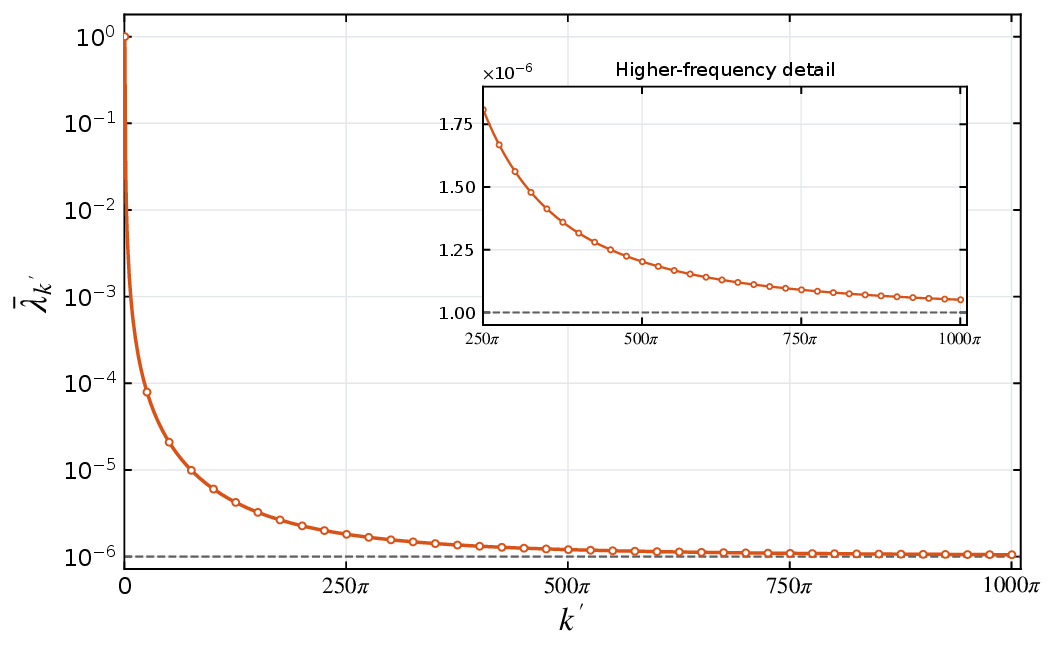}
\caption{Comparison between $\lambda^H_k$ (left) and $\bar\lambda_{k'}$ (right) for the two-polynomial basis.}
\label{fig:two-polynomial-spectral-means}
\end{figure}

The next theorem compares two separate training runs, one with target $\chi_{\Bk}$ and one with target $\chi_{\Bk'}$, using the same basis functions, learning rate, and zero initialization.

\begin{theorem}\label{thm:comparison}
Let $0<\eta\le1/\lambda_{1}$, $0<\eps<1$, and $0\le c<1$, and let $\Bk,\Bk'\in\Lambda$ with $\ell_{\Bk}>0$ and $\ell_{\Bk'}>0$.
\begin{enumerate}
\item[(a)] If $U_\eps(\Bk)<L_\eps(\Bk')$, then $T_\eps(\Bk)<T_\eps(\Bk')$. 
\item[(b)] Assume that there are $0<\mu'<\mu\le\lambda_{1}$ and $\eps<p\le1$ such that $\sum_{\lambda_j<\mu}\varpi_j(\Bk)\le c\eps$, $\sum_{\lambda_j\le\mu'}\varpi_j(\Bk')\ge p$, and
\begin{equation}\label{eq:separation}
\frac{\mu'}{\mu}<\frac{\log(p/\eps)}{\log(p/\eps)+\log\bigl(1/((1-c)\eps)\bigr)+1}.
\end{equation}
Then $U_\eps(\Bk)<L_\eps(\Bk')$, and hence $T_\eps(\Bk)<T_\eps(\Bk')$.
\item[(c)] If
\begin{equation}\label{eq:mean-separation}
\bar\lambda_{\Bk'}<\kappa(\eps)\,\lambda^H_{\Bk},\qquad \kappa(\eps)=\frac{\ee\eps\log(1/\eps)}{1+\ee\eps\,(1+\log(1/\eps))},
\end{equation}
then $U_\eps(\Bk)<L_\eps(\Bk')$, and hence $T_\eps(\Bk)<T_\eps(\Bk')$.
\end{enumerate}
\end{theorem}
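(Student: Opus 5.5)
Part (a) follows directly from the sandwich \eqref{eq:U-L}. Applying it to both frequencies gives
\[
T_\eps(\Bk)\le U_\eps(\Bk)<L_\eps(\Bk')\le T_\eps(\Bk').
\]
For (b) and (c), I will not estimate $U_\eps(\Bk)$ and $L_\eps(\Bk')$ exactly. Since $U_\eps$ is a minimum and $L_\eps$ is a maximum, it suffices to exhibit one admissible upper bound for $\Bk$ and one admissible lower bound for $\Bk'$ from Theorem~\ref{thm:learning-time} and check that the first is strictly smaller. Part (a) then finishes the argument.

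For (b), I will use the given $\mu$ in \eqref{eq:quantile-time}. The hypothesis $\sum_{\lambda_j<\mu}\varpi_j(\Bk)\le c\eps$ makes it admissible, so
\[
U_\eps(\Bk)\le 1+\frac{A}{\eta\mu},\qquad A:=\log\bigl(1/((1-c)\eps)\bigr).
\]
For $\Bk'$, I will use $\mu'$ in \eqref{eq:quantile-lower}; this is admissible since $\sum_{\lambda_j\le\mu'}\varpi_j(\Bk')\ge p>\eps$. Put $B:=\log(p/\eps)>0$. Monotonicity of the logarithm gives a numerator at least $B$. Since $\eta\mu'<\eta\mu\le1$, the zero-convention case cannot occur, and the elementary inequality $-\log(1-z)\le z/(1-z)$ gives
\[
L_\eps(\Bk')\ge B\Bigl(\frac{1}{\eta\mu'}-1\Bigr).
\]
It then remains to show $1+A/(\eta\mu)<B\bigl(1/(\eta\mu')-1\bigr)$. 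Multiplying by $\eta\mu$, this becomes $\eta\mu(1+B)+A<B\mu/\mu'$. Because $\eta\mu\le\eta\lambda_1\le1$, the left side is at most $1+A+B$. The required inequality $1+A+B<B\mu/\mu'$ is exactly \eqref{eq:separation} after rearrangement.

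For (c), I will use \eqref{eq:harmonic-time} with $\mu=\lambda_n$ for $\Bk$. Then $\sum_{\lambda_j<\lambda_n}\varpi_j(\Bk)=0<\eps$ and the remaining sum equals $1/\lambda^H_{\Bk}$, so
\[
U_\eps(\Bk)\le 1+\frac{1}{\ee\eta\eps\lambda^H_{\Bk}}.
\]
For $\Bk'$, I will use the second lower bound in \eqref{eq:jensen-time}:
\[
L_\eps(\Bk')\ge\Bigl(\frac{1}{\eta\bar\lambda_{\Bk'}}-1\Bigr)\log(1/\eps).
\]
The zero convention is not an issue: $\kappa(\eps)<1$ and $\lambda^H_{\Bk}\le\lambda_1$, so $\eta\bar\lambda_{\Bk'}<1$. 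Multiplying the desired strict inequality by $\eta\lambda^H_{\Bk}$ reduces it to
\[
\eta\lambda^H_{\Bk}\bigl(1+\log(1/\eps)\bigr)+\frac{1}{\ee\eps}<\frac{\lambda^H_{\Bk}}{\bar\lambda_{\Bk'}}\log(1/\eps).
\]
Bounding $\eta\lambda^H_{\Bk}\le1$ again, it suffices that $1+\log(1/\eps)+1/(\ee\eps)<(\lambda^H_{\Bk}/\bar\lambda_{\Bk'})\log(1/\eps)$. Multiplying through by $\ee\eps$ turns this into \eqref{eq:mean-separation}.

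The only real obstacle is bookkeeping. I need to confirm that each chosen bound is admissible in the definitions of $U_\eps$ and $L_\eps$, including the ranges $\mu,\mu'\in(0,\lambda_1]$ and the strict versus non-strict sums. I also need to use $\eta\lambda\le1$ consistently so that the crude $\eta$-dependent terms can be absorbed into the constants $1$ appearing in \eqref{eq:separation} and $\kappa(\eps)$.
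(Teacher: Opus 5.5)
Your proof is correct and follows the paper's argument. You make the same admissible choices: $\mu$ in \eqref{eq:quantile-time} and $\mu'$ in \eqref{eq:quantile-lower} for (b), and $\mu=\lambda_n$ in \eqref{eq:harmonic-time} with the second Jensen bound for (c), closed via $-\log(1-z)\le z/(1-z)$ and $\eta\mu\le1$, $\eta\lambda^H_{\Bk}\le1$; the only difference is cosmetic, since you normalize by $\eta\mu$ and $\eta\lambda^H_{\Bk}$ where the paper multiplies by $\eta\mu'$ and $\eta\bar\lambda_{\Bk'}$.
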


\begin{proof}
(a) By \eqref{eq:U-L}, $T_\eps(\Bk)\le U_\eps(\Bk)<L_\eps(\Bk')\le T_\eps(\Bk')$.

(b) By \eqref{eq:quantile-time}, $U_\eps(\Bk)\le\log\bigl(1/((1-c)\eps)\bigr)/(\eta\mu)+1$. By \eqref{eq:quantile-lower}, $\sum_{\lambda_j\le\mu'}\varpi_j(\Bk')\ge p>\eps$, and $-\log(1-z)\le z/(1-z)$, we have
\begin{equation}
L_\eps(\Bk')\ge\frac{\log(p/\eps)}{-\log(1-\eta\mu')}\ge\frac{(1-\eta\mu')\log(p/\eps)}{\eta\mu'}.
\end{equation}
Since $\eta\mu\le1$, we have $\eta\mu'\le\mu'/\mu$. Hence
\begin{equation}
\begin{aligned}
\eta\mu'\bigl(L_\eps(\Bk')-U_\eps(\Bk)\bigr)&\ge(1-\eta\mu')\log(p/\eps)-\frac{\mu'}{\mu}\log\bigl(1/((1-c)\eps)\bigr)-\eta\mu'\\
&\ge\Bigl(1-\frac{\mu'}{\mu}\Bigr)\log(p/\eps)-\frac{\mu'}{\mu}\Bigl(\log\bigl(1/((1-c)\eps)\bigr)+1\Bigr),
\end{aligned}
\end{equation}
and the right-hand side is positive by \eqref{eq:separation}.

(c) Take $\mu=\lambda_{n}$ in \eqref{eq:harmonic-time}. Since $\sum_{\lambda_j<\lambda_{n}}\varpi_j(\Bk)=0$ and $\sum_{j=1}^{n}\varpi_j(\Bk)/\lambda_j=1/\lambda^H_{\Bk}$, we have $U_\eps(\Bk)\le1+1/(\ee\eta\eps\lambda^H_{\Bk})$. By \eqref{eq:jensen-time}, $L_\eps(\Bk')\ge(1-\eta\bar\lambda_{\Bk'})\log(1/\eps)/(\eta\bar\lambda_{\Bk'})$. Let $x=\eta\bar\lambda_{\Bk'}$ and $y=\eta\lambda^H_{\Bk}$. Then $y\le1$, since $\lambda^H_{\Bk}\le\lambda_{1}$, and $x<\kappa(\eps)y<1$. Using $x\le x/y$, we obtain
\begin{equation}
x\bigl(L_\eps(\Bk')-U_\eps(\Bk)\bigr)\ge(1-x)\log(1/\eps)-x-\frac{x}{\ee\eps y}\ge\log(1/\eps)-\frac{x}{y}\Bigl(\log(1/\eps)+1+\frac{1}{\ee\eps}\Bigr).
\end{equation}
Since $\kappa(\eps)=\log(1/\eps)/(\log(1/\eps)+1+1/(\ee\eps))$, the right-hand side is positive by \eqref{eq:mean-separation}.
\end{proof}

Theorem~\ref{thm:comparison} gives sufficient conditions for the learning order in these two single-frequency runs. Result (a) gives the general form; results (b) and (c) give sufficient conditions for (a). We remark that the conditions in (b) and (c) are not strict, which can be satisfied by many basis classes, e.g., unnormalized Legendre bases and weighted Fourier bases for suitable frequency pairs. See Section~\ref{sec:exp-comparison} for numerical examples.

We usually study the frequency principle by comparing two frequencies with $\norm{\Bk}<\norm{\Bk'}$ and asking whether $T_\eps(\Bk)<T_\eps(\Bk')$. Here Theorem~\ref{thm:comparison}(c) uses the condition $\bar\lambda_{\Bk'}<\kappa(\eps)\lambda^H_{\Bk}$ as a sufficient criterion in place of $k<k'$. In general, $\bar\lambda_{\Bk'}$ does not tend to zero; instead, it approaches a positive limit. Thus taking $k'$ sufficiently large does not guarantee \eqref{eq:mean-separation} for every fixed $k$ and $\eps$. For suitable choices of $k$ and $\eps$, however, the condition can hold for all sufficiently large $k'$. In this case, $T_\eps(k)<T_\eps(k')$, giving a limited form of the frequency principle. 

For many classes, the condition in Theorem~\ref{thm:comparison}(a) also implies the frequency order. For example, take $\phi_s=(1+\norm{\Bk_s})^{-p}\chi_{\Bk_s}$, $s=1,\ldots,n$, where $p>0$ and $\Bk_1,\ldots,\Bk_n\in\Lambda$ are distinct. Under the assumptions of Theorem~\ref{thm:comparison}, each selected frequency $\Bk$ has $\ell_{\Bk}=1$ and $\rho_{\Bk}(t)=\bigl[1-\eta(1+\norm{\Bk})^{-2p}\bigr]^t$. Hence $\rho_{\Bk}(t)$ and $T_\eps(\Bk)$ are nondecreasing in $\norm{\Bk}$. Consequently, for $\Bk,\Bk'\in\{\Bk_1,\ldots,\Bk_n\}$, if $U_\eps(\Bk)<L_\eps(\Bk')$, then $\norm{\Bk}<\norm{\Bk'}$. This example clarifies the general frequency principle. The usual low-frequency priority states that $T_\eps(\Bk)<T_\eps(\Bk')$ whenever $\norm{\Bk}<\norm{\Bk'}$. In our general formulation, the condition $U_\eps(\Bk)<L_\eps(\Bk')$ guarantees $T_\eps(\Bk)<T_\eps(\Bk')$ under the assumptions of Theorem~\ref{thm:comparison}. Under additional conditions on the basis, as in the example above, the same comparison also implies $\norm{\Bk}<\norm{\Bk'}$. Thus, the comparison condition determines both the learning-time order and the frequency order in these classes.

The following corollary connects this condition to frequency magnitude through the decay of the basis functions.

\begin{corollary}\label{cor:decay-comparison}
Let $0<\eta\le1/\lambda_{1}$ and $0<\eps<1$, and assume \eqref{eq:basis-decay}. Let $\Bk\in\Lambda$ with $\ell_{\Bk}>0$, and let $\bm c_{\Bk}$ be the unique coefficient vector satisfying $h_{\bm c_{\Bk}}=P_V\chi_{\Bk}$. If $\Bk'\in\mathcal A$ satisfies
\begin{equation}\label{eq:decay-comparison}
\ell_{\Bk'}\norm{\Bk'}^{2(\nu+1)}>\frac{nB_\phi^2\,\norm{\bm c_{\Bk}}^2}{\kappa(\eps)\,\ell_{\Bk}},
\end{equation}
then $T_\eps(\Bk)<T_\eps(\Bk')$.
\end{corollary}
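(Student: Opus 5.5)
The plan is to reduce the statement to Theorem~\ref{thm:comparison}(c). It suffices to verify the mean-separation condition $\bar\lambda_{\Bk'}<\kappa(\eps)\lambda^H_{\Bk}$; the conclusion $T_\eps(\Bk)<T_\eps(\Bk')$ then follows directly. The hypothesis $\ell_{\Bk'}>0$ needed there is automatic, because the left side of \eqref{eq:decay-comparison} is strictly positive. The proof therefore amounts to an upper bound on $\bar\lambda_{\Bk'}$ and an exact expression for $\lambda^H_{\Bk}$.

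For the upper bound, use the factorization \eqref{eq:factorization}, $\bar\lambda_{\Bk'}=\mathcal K_{\Bk'\Bk'}/\ell_{\Bk'}$. Since $\Bk'\in\mathcal A$, the first inequality in \eqref{eq:capacity-bound} from Proposition~\ref{prop:capacity} gives $\mathcal K_{\Bk'\Bk'}\le nB_\phi^2\norm{\Bk'}^{-2(\nu+1)}$. Hence
\begin{equation*}
\bar\lambda_{\Bk'}\le\frac{nB_\phi^2}{\ell_{\Bk'}\norm{\Bk'}^{2(\nu+1)}}.
\end{equation*}

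For the harmonic mean, use the identity stated before Theorem~\ref{thm:comparison}: $\lambda^H_{\Bk}=\ell_{\Bk}/\norm{\bm c_{\Bk}}^2$. To be safe, I would re-derive it briefly. Writing $P_V\chi_{\Bk}=\sum_j\langle\chi_{\Bk},q_j\rangle q_j$ and using \eqref{eq:qj}, the coefficient vector is $\bm c_{\Bk}=\sum_j\lambda_j^{-1/2}\overline{(q_j)_{\Bk}}\bm v_j$. Orthonormality of the $\bm v_j$ then gives $\norm{\bm c_{\Bk}}^2=\sum_j\abs{(q_j)_{\Bk}}^2/\lambda_j=\ell_{\Bk}/\lambda^H_{\Bk}$.

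Combining the two steps, \eqref{eq:decay-comparison} rearranges exactly to
\begin{equation*}
\frac{nB_\phi^2}{\ell_{\Bk'}\norm{\Bk'}^{2(\nu+1)}}<\kappa(\eps)\,\frac{\ell_{\Bk}}{\norm{\bm c_{\Bk}}^2}=\kappa(\eps)\lambda^H_{\Bk}.
\end{equation*}
Therefore $\bar\lambda_{\Bk'}<\kappa(\eps)\lambda^H_{\Bk}$, and Theorem~\ref{thm:comparison}(c) applies. No step is a real obstacle. The only points needing care are the conjugation and normalization in the formula for $\bm c_{\Bk}$, and the fact that $\kappa(\eps)>0$, so dividing by it preserves the inequality.
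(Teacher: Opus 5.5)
Your proposal is correct and follows essentially the paper's own proof. It bounds $\bar\lambda_{\Bk'}=\mathcal K_{\Bk'\Bk'}/\ell_{\Bk'}$ via the diagonal estimate in \eqref{eq:capacity-bound}, rewrites \eqref{eq:decay-comparison} using $\lambda^H_{\Bk}=\ell_{\Bk}/\norm{\bm c_{\Bk}}^2$, and invokes Theorem~\ref{thm:comparison}(c); your extra remarks (that $\ell_{\Bk'}>0$ is forced by the strict inequality, and the re-derivation of $\norm{\bm c_{\Bk}}^2$) are correct details the paper leaves implicit.
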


\begin{proof}
By \eqref{eq:speed} and \eqref{eq:capacity-bound}, $\bar\lambda_{\Bk'}=\mathcal K_{\Bk'\Bk'}/\ell_{\Bk'}\le nB_\phi^2\norm{\Bk'}^{-2(\nu+1)}/\ell_{\Bk'}$. By \eqref{eq:decay-comparison} and $\lambda^H_{\Bk}=\ell_{\Bk}/\norm{\bm c_{\Bk}}^2$, this is smaller than $\kappa(\eps)\lambda^H_{\Bk}$. Theorem~\ref{thm:comparison}(c) gives the claim.
\end{proof}

Corollary~\ref{cor:decay-comparison} gives a sufficient condition in terms of frequency for lower frequencies to converge faster than higher frequencies. For a fixed $\Bk$, condition \eqref{eq:decay-comparison} requires $\ell_{\Bk'}\norm{\Bk'}^{2(\nu+1)}$ to be sufficiently large relative to $\norm{\bm c_{\Bk}}^2/\ell_{\Bk}$. Thus a larger $\norm{\Bk'}$ helps satisfy the condition, provided that $\ell_{\Bk'}$ does not become too small. For a pair with $\norm{\Bk}<\norm{\Bk'}$ satisfying this condition, the lower frequency reaches the prescribed relative error first.

\subsection{General targets}\label{sec:general-targets}
Theorem~\ref{thm:comparison} compares two separate training runs, one with target $f=\chi_{\Bk}$ and the other with target $f=\chi_{\Bk'}$. We now consider a single training run whose target contains several frequencies. 

We first extend \eqref{eq:rho-mean} from the single-frequency target $\chi_{\Bk}$ to any function $f\in L^2(\Omega)$. Note that $P_Vf$ is the represented part that can be actually learned. We assume that $\ell_f:=\norm{P_Vf}_{L^2(\Omega)}^2>0$. We also define
$\varpi_j(f)=\frac{\abs{\ipO{q_j}{f}}^2}{\ell_f}=\frac{\abs{\ipO{q_j}{P_Vf}}^2}{\ell_f}$, which is the square of the ratio between the coefficient of $P_Vf$ in terms of $q_j$ and its length. Hence, $\varpi_j(f)\in[0,1]$ and $\sum_{j}\varpi_j(f)=1$.

Here we still consider zero initialization $h^{(0)}=0$. We write error $e_f^{(t)}=\ipO{h^{(t)}-f}{f}$ and $e_f^{(\infty)}=\ipO{P_Vf-f}{f}$. Then define the relative error
\begin{equation}
\rho_f(t):=\frac{e_f^{(t)}-e_f^{(\infty)}}
{e_f^{(0)}-e_f^{(\infty)}}=\frac{\langle h^{(t)},f\rangle_\Omega-\ell_f}{-\ell_f}
=1-\frac{\langle h^{(t)},f\rangle_\Omega}{\ell_f}.
\end{equation}
By \eqref{ht}, we have
\begin{equation}\label{eq:rho-direction}
\rho_f(t)=1-\frac{\ipO{h^{(t)}}{f}}{\ell_f}=\sum_{j=1}^{n}\varpi_j(f)(1-\eta\lambda_j)^t.
\end{equation}
For $0<\eta\le1/\lambda_{1}$, $\rho_f(t)$ is real and decreases monotonically from one to zero.
Similar to the notations in previous sections, we define the arithmetic mean by
\begin{equation}
\bar\lambda_{f}=\sum_{j=1}^{n}\varpi_j(f)\lambda_j,
\end{equation}
and the learning time required to reduce the relative error below $\eps$ by
\begin{equation}\label{eq:rho-T-general}
T_\eps(f)=\min\left\{t\in\N_0:\rho_{f}(t)\le\eps\right\}.
\end{equation}
Let $0<\eta\le1/\lambda_1$ and $0<\eps<1$, and fix $0\le c<1$. Define $U_\eps(f)$ as the minimum of
\begin{equation}
1+\frac{\log\bigl(1/((1-c)\eps)\bigr)}{\eta\mu}
\qquad\text{and}\qquad
1+\frac{\sum_{\lambda_j\ge\mu}\varpi_j(f)/\lambda_j}
{\ee\,\eta\,\bigl(\eps-\sum_{\lambda_j<\mu}\varpi_j(f)\bigr)}
\end{equation}
over $\mu\in(0,\lambda_1]$ satisfying $\sum_{\lambda_j<\mu}\varpi_j(f)\le c\eps$ and $\sum_{\lambda_j<\mu}\varpi_j(f)<\eps$, respectively. Define $L_\eps(f)$ as the maximum of
\begin{equation}
\frac{\log(1/\eps)}{-\log(1-\eta\bar\lambda_f)}
\qquad\text{and}\qquad
\frac{\log\bigl(\eps^{-1}\sum_{\lambda_j\le\mu}\varpi_j(f)\bigr)}
{-\log(1-\eta\mu)},
\end{equation}
where the second expression is maximized over $\mu\in(0,\lambda_1]$ satisfying $\sum_{\lambda_j\le\mu}\varpi_j(f)>\eps$. The two expressions defining $L_\eps(f)$ are taken to be zero if $\eta\bar\lambda_f=1$ or $\eta\mu=1$, respectively.

For a general target $f$, the next theorem bounds the absolute error.

\begin{theorem}\label{prop:several}
Let $0<\eta<2/\lambda_{1}$, $f\in L^2(\Omega)$, and $h^{(0)}=0$. The following statements hold for all $t\in\N_0$.
\begin{enumerate}
\item[(a)] For every $f$ with $\norm{f}_{L^2(\Omega)}=1$,
\begin{equation}\label{eq:general-target}
\abs{e_f^{(t)}-e_f^{(\infty)}}\le\norm{P_Vf}_{L^2(\Omega)}\bigl(\ell_f\,\rho_f(2t)\bigr)^{1/2}.
\end{equation}
\item[(b)] If $f=\sum_{l=1}^Ka_lu_l$, where $K\in\mathbb{Z}^+$, $a_l\in\C$ and $u_1,\ldots,u_K$ are orthonormal functions under $\ipO{\cdot}{\cdot}$ with $\ell_{u_l}>0$ ($l=1,\dots,K$), then
\begin{equation}\label{eq:several-modes}
\begin{aligned}
e_{u_l}^{(t)}-e_{u_l}^{(\infty)}&=-a_l\,\ell_{u_l}\rho_{u_l}(t)-\sum_{i\ne l}a_i\,C_{il}(t),\\
C_{il}(t)&=\sum_{j=1}^{n}(1-\eta\lambda_j)^t\ipO{u_i}{q_j}\ipO{q_j}{u_l}.
\end{aligned}
\end{equation}
The cross terms satisfy
\begin{equation}\label{eq:cross-bound}
\begin{aligned}
\abs{C_{il}(t)}&\le(\ell_{u_i}\ell_{u_l})^{1/2}\min\{\rho_{u_i}(2t),\rho_{u_l}(2t)\}^{1/2},\\
\abs{C_{il}(0)}&\le\bigl((1-\ell_{u_i})(1-\ell_{u_l})\bigr)^{1/2}.
\end{aligned}
\end{equation}
If $\eta\le1/\lambda_{1}$, then moreover $\abs{C_{il}(t)-C_{il}(0)}\le(\ell_{u_i}\ell_{u_l})^{1/2}\min\{1-\rho_{u_i}(t),1-\rho_{u_l}(t)\}^{1/2}$.
\end{enumerate}
\end{theorem}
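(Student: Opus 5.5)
The plan is to work entirely from the spectral representation \eqref{ht}. With $h^{(0)}=0$ and $0<\eta<2/\lambda_1$ it gives
\begin{equation*}
h^{(t)}-P_Vf=-\sum_{j=1}^n(1-\eta\lambda_j)^t\ipO{f}{q_j}q_j ,
\end{equation*}
since $P_Vf=\sum_j\ipO{f}{q_j}q_j$. For any test function $u$, the definitions give
\begin{equation*}
e_u^{(t)}-e_u^{(\infty)}=\ipO{h^{(t)}-P_Vf}{u}=-\sum_{j=1}^n(1-\eta\lambda_j)^t\ipO{f}{q_j}\ipO{q_j}{u}.
\end{equation*}
Both parts then reduce to Cauchy--Schwarz on this finite sum. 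The one thing to watch is that $\rho_{\cdot}(2t)$ is the natural quantity, because $\abs{(1-\eta\lambda_j)^t}^2=(1-\eta\lambda_j)^{2t}\ge0$ even when $\eta>1/\lambda_1$.

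\textbf{Part (a).} Take $u=f$. Apply Cauchy--Schwarz with weights $\abs{\ipO{f}{q_j}}$ on both factors:
\begin{equation*}
\abs{e_f^{(t)}-e_f^{(\infty)}}\le\Bigl(\sum_j\abs{\ipO{f}{q_j}}^2\Bigr)^{1/2}\Bigl(\sum_j(1-\eta\lambda_j)^{2t}\abs{\ipO{f}{q_j}}^2\Bigr)^{1/2}.
\end{equation*}
The first factor is $\norm{P_Vf}$. The second equals $(\ell_f\rho_f(2t))^{1/2}$ by \eqref{eq:rho-direction}, which is the algebraic identity $\rho_f(s)=\sum_j\varpi_j(f)(1-\eta\lambda_j)^s$ and holds for any $\eta$. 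The normalization $\norm f=1$ is not even needed.

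\textbf{Part (b).} Substitute $f=\sum_ia_iu_i$ and $u=u_l$, and use linearity in the first slot. The diagonal term $i=l$ is $-a_l\sum_j(1-\eta\lambda_j)^t\abs{\ipO{u_l}{q_j}}^2=-a_l\ell_{u_l}\rho_{u_l}(t)$. The off-diagonal terms are exactly $-a_iC_{il}(t)$.

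For the first bound in \eqref{eq:cross-bound}, split $(1-\eta\lambda_j)^t$ entirely onto one factor and apply Cauchy--Schwarz. This gives
\begin{equation*}
\abs{C_{il}(t)}\le(\ell_{u_i}\rho_{u_i}(2t))^{1/2}\,\ell_{u_l}^{1/2},
\end{equation*}
and the symmetric choice gives the other term of the minimum.

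For $C_{il}(0)=\ipO{P_Vu_i}{u_l}$, orthonormality gives $\ipO{u_i}{u_l}=0$ for $i\ne l$. Hence $C_{il}(0)=-\ipO{(I-P_V)u_i}{(I-P_V)u_l}$, bounded by $\norm{(I-P_V)u_i}\norm{(I-P_V)u_l}=((1-\ell_{u_i})(1-\ell_{u_l}))^{1/2}$.

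For the last claim, when $\eta\le1/\lambda_1$ write
\begin{equation*}
C_{il}(0)-C_{il}(t)=\sum_j\bigl(1-(1-\eta\lambda_j)^t\bigr)\ipO{u_i}{q_j}\ipO{q_j}{u_l}.
\end{equation*}
Each multiplier $m_j=1-(1-\eta\lambda_j)^t$ lies in $[0,1]$. Put the full multiplier on one side, Cauchy--Schwarz, and use $m_j^2\le m_j$:
\begin{equation*}
\abs{C_{il}(t)-C_{il}(0)}\le\bigl(\ell_{u_i}(1-\rho_{u_i}(t))\bigr)^{1/2}\ell_{u_l}^{1/2},
\end{equation*}
and symmetrically with the roles of $i$ and $l$ exchanged.

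The main obstacle is this final inequality. Putting the multiplier on only one factor is what lets the minimum appear. The reduction $m_j^2\le m_j$ converts the bound into $1-\rho(t)$ rather than $1-\rho(2t)$, and it requires $m_j\in[0,1]$, which is where $\eta\le1/\lambda_1$ is used.
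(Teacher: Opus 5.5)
Your proposal is correct and follows essentially the same route as the paper. You expand $h^{(t)}-P_Vf$ in the eigenbasis $\{q_j\}$, apply Cauchy--Schwarz with $(1-\eta\lambda_j)^t$ on one side to get the $\rho(2t)$ bounds, write $C_{il}(0)=-\ipO{(I-P_V)u_i}{(I-P_V)u_l}$ using orthogonality, and use $m_j^2\le m_j$ when $\eta\le1/\lambda_1$; your remark that $\norm{f}_{L^2(\Omega)}=1$ is not needed in (a) is also correct.
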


\begin{proof}
Expanding $h^{(t)}-P_Vf$ in the basis $q_j$ as in Proposition~\ref{prop:per-frequency} gives
\begin{equation}\label{eq:direction-error}
e_f^{(t)}-e_f^{(\infty)}=\ipO{h^{(t)}-P_Vf}{f}=\sum_{j=1}^{n}(1-\eta\lambda_j)^t\ipO{h^{(0)}-f}{q_j}\ipO{q_j}{f}.
\end{equation}
Part (a) follows by the Cauchy--Schwarz inequality. Substituting $f=\sum_la_lu_l$ and $h^{(0)}=0$ gives \eqref{eq:several-modes}. Applying the same inequality to $C_{il}(t)$, with the factor $(1-\eta\lambda_j)^t$ in either sum, gives the first bound in \eqref{eq:cross-bound}. The second follows from $C_{il}(0)=-\ipO{(I-P_V)u_l}{(I-P_V)u_i}$ for $i\ne l$. If $\eta\le1/\lambda_{1}$, then $(1-(1-\eta\lambda_j)^t)^2\le1-(1-\eta\lambda_j)^t$. Another application of the Cauchy--Schwarz inequality therefore gives $\abs{C_{il}(t)-C_{il}(0)}\le\bigl(\ell_{u_i}\ell_{u_l}(1-\rho_{u_l}(t))\bigr)^{1/2}$, and interchanging $i$ and $l$ completes the proof.
\end{proof}

Let $f=au+a'u'$ with $a,a'\ne0$, where $u,u'\in L^2(\Omega)$ are orthonormal with $\ell_u,\ell_{u'}>0$. Define
\begin{equation}
\beta=\frac{\abs{a'}\,\ell_{u'}^{1/2}}{\abs{a}\,\ell_u^{1/2}},\qquad \gamma=\Bigl(\frac{(1-\ell_u)(1-\ell_{u'})}{\ell_u\ell_{u'}}\Bigr)^{1/2}.
\end{equation}
For $v\in\{u,u'\}$ with $e_v^{(0)}\ne e_v^{(\infty)}$, also define $R_v(t)=\abs{e_v^{(t)}-e_v^{(\infty)}}/\abs{e_v^{(0)}-e_v^{(\infty)}}$. For $0<\eps<1$, let $T_\eps(v;f)=\min\{t\in\N_0:R_v(s)\le\eps\text{ for all }s\ge t\}$. This is the first iteration from which the relative error stays at most $\eps$; the error need not decrease monotonically. The next theorem combines these estimates with the amplitude ratio to compare two frequencies in one run. 

\begin{theorem}\label{cor:two-modes}
For the target $f$ defined above, assume $0<\eta\le1/\lambda_{1}$, $0<\eps<1$, $h^{(0)}=0$, $\beta\gamma<1$, and $\gamma<\beta$. Then, for all $t\in\N_0$,
\begin{align}
R_u(t)&\le\frac{\rho_u(t)+\beta\rho_u(2t)^{1/2}}{1-\beta\gamma},\label{eq:two-low}\\
R_{u'}(t)&\ge\frac{\rho_{u'}(t)-\beta^{-1}\min\bigl\{\rho_u(2t)^{1/2},\,\gamma+(1-\rho_{u'}(t))^{1/2}\bigr\}}{1+\beta^{-1}\gamma}.\label{eq:two-high}
\end{align}
In particular, if the right-hand side of \eqref{eq:two-low} is at most $\eps$ and the right-hand side of \eqref{eq:two-high} is larger than $\eps$ for some $t_1\in\N_0$, then $T_\eps(u;f)\le t_1<T_\eps(u';f)$.
\end{theorem}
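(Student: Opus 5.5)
The plan is to apply Theorem~\ref{prop:several}(b) with $K=2$, $u_1=u$, $u_2=u'$, $a_1=a$, $a_2=a'$. This expresses both the numerator and the denominator of $R_u(t)$ and $R_{u'}(t)$ through $\rho_u,\rho_{u'}$ and the single cross coefficient. The cross coefficient is then controlled by \eqref{eq:cross-bound}. Concretely, \eqref{eq:several-modes} gives
\begin{equation*}
e_u^{(t)}-e_u^{(\infty)}=-a\,\ell_u\rho_u(t)-a'C_{21}(t),\qquad e_{u'}^{(t)}-e_{u'}^{(\infty)}=-a'\ell_{u'}\rho_{u'}(t)-a\,C_{12}(t).
\end{equation*}
Moreover $\abs{C_{12}(t)}=\abs{C_{21}(t)}$, since $\eta\lambda_j$ is real. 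Because $0<\eta\le1/\lambda_1$, each $\rho_v(t)$ is real and lies in $[0,1]$, and \eqref{eq:rho-mean}-type representations show it is nonincreasing in $t$. I will use the normalisation identities
\begin{equation*}
\frac{\abs{a'}(\ell_u\ell_{u'})^{1/2}}{\abs{a}\,\ell_u}=\beta,\qquad \frac{\abs{a}(\ell_u\ell_{u'})^{1/2}}{\abs{a'}\,\ell_{u'}}=\beta^{-1},\qquad \bigl((1-\ell_u)(1-\ell_{u'})\bigr)^{1/2}=\gamma(\ell_u\ell_{u'})^{1/2}.
\end{equation*}

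For \eqref{eq:two-low}, I will bound the denominator from below at $t=0$. Since $\rho_u(0)=1$ and $\abs{C_{21}(0)}\le\gamma(\ell_u\ell_{u'})^{1/2}$ by \eqref{eq:cross-bound}, the reverse triangle inequality gives $\abs{e_u^{(0)}-e_u^{(\infty)}}\ge\abs{a}\ell_u(1-\beta\gamma)$, which is positive because $\beta\gamma<1$. For the numerator, the first bound in \eqref{eq:cross-bound} with $\rho_u(2t)$ gives $\abs{e_u^{(t)}-e_u^{(\infty)}}\le\abs{a}\ell_u\bigl(\rho_u(t)+\beta\rho_u(2t)^{1/2}\bigr)$. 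Dividing the two bounds yields \eqref{eq:two-low}.

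For \eqref{eq:two-high}, the denominator is bounded from above: $\abs{e_{u'}^{(0)}-e_{u'}^{(\infty)}}\le\abs{a'}\ell_{u'}(1+\beta^{-1}\gamma)$. It is also bounded from below by $\abs{a'}\ell_{u'}(1-\gamma/\beta)>0$ using $\gamma<\beta$, so $R_{u'}$ is well defined. For the numerator, I use $\abs{e_{u'}^{(t)}-e_{u'}^{(\infty)}}\ge\abs{a'}\ell_{u'}\rho_{u'}(t)-\abs{a}\abs{C_{12}(t)}$ and bound $\abs{C_{12}(t)}$ in two ways. The first is $(\ell_u\ell_{u'})^{1/2}\rho_u(2t)^{1/2}$, from \eqref{eq:cross-bound}. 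The second is $\abs{C_{12}(0)}+\abs{C_{12}(t)-C_{12}(0)}\le(\ell_u\ell_{u'})^{1/2}\bigl(\gamma+(1-\rho_{u'}(t))^{1/2}\bigr)$, which uses the last statement of Theorem~\ref{prop:several}(b) and needs $\eta\le1/\lambda_1$. Taking the minimum and dividing gives \eqref{eq:two-high}. If the numerator bound is negative, the inequality holds trivially.

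For the final claim, the right-hand side of \eqref{eq:two-low} is nonincreasing in $t$, since $\rho_u$ is nonincreasing. If it is at most $\eps$ at $t_1$, then $R_u(s)\le\eps$ for all $s\ge t_1$, so $T_\eps(u;f)\le t_1$. If the right-hand side of \eqref{eq:two-high} exceeds $\eps$ at $t_1$, then $R_{u'}(t_1)>\eps$. Hence no $t\le t_1$ can serve as a starting point of the tail condition, and $T_\eps(u';f)>t_1$. The main obstacle is purely bookkeeping: checking that the constants normalise exactly to $\beta$, $\beta^{-1}$ and $\gamma$, and using the correct conjugate-order cross coefficient in each mode.
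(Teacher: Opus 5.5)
Your proposal is correct and follows the paper's proof essentially step for step. You specialize Theorem~\ref{prop:several}(b) to $K=2$ and bound the $t=0$ denominators using $\abs{C_{12}(0)}\le\gamma(\ell_u\ell_{u'})^{1/2}$. You control the numerators with \eqref{eq:cross-bound}, adding the split $\abs{C_{12}(t)}\le\abs{C_{12}(0)}+\abs{C_{12}(t)-C_{12}(0)}$ for \eqref{eq:two-high}, and conclude from the monotonicity of the right-hand side of \eqref{eq:two-low}. Your extra details fill in points the paper leaves implicit: the identity $\abs{C_{12}(t)}=\abs{C_{21}(t)}$, the trivial case of a negative numerator bound, and the explicit tail argument for $T_\eps(u';f)>t_1$.
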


\begin{proof}
By Theorem~\ref{prop:several}(b) at $t=0$, we have $\abs{e_u^{(0)}-e_u^{(\infty)}}\ge\abs{a}\ell_u(1-\beta\gamma)$ and $\abs{a'}\ell_{u'}(1-\beta^{-1}\gamma)\le\abs{e_{u'}^{(0)}-e_{u'}^{(\infty)}}\le\abs{a'}\ell_{u'}(1+\beta^{-1}\gamma)$. Thus both denominators are positive. Applying \eqref{eq:cross-bound} to the two terms in \eqref{eq:several-modes} gives \eqref{eq:two-low}. For \eqref{eq:two-high}, also use $\abs{C_{12}(t)}\le\abs{C_{12}(0)}+\abs{C_{12}(t)-C_{12}(0)}$ and the last bound of Proposition~\ref{prop:several}(b), followed by the reverse triangle inequality. Finally, the upper bound in \eqref{eq:two-low} is nonincreasing. Hence $R_u(s)\le\eps$ for all $s\ge t_1$, while $R_{u'}(t_1)>\eps$ gives $T_\eps(u';f)>t_1$.
\end{proof}

When $\ell_u$ and $\ell_{u'}$ are close to one, $\gamma$ is small. If $|a|$ and $|a'|$ are also comparable, then $\beta$ is of order one. In this case, $\beta\gamma<1$ and $\gamma<\beta$ are satisfied. A specific example will be presented in Section \ref{sec:exp-general-targets}. Theorem~\ref{thm:comparison} compares two separate training runs, whereas Theorem~\ref{cor:two-modes} compares two components learned in the same run. In the latter case, each component can affect the error in the other direction through the cross terms. The conditions in Theorem~\ref{cor:two-modes} also control these effects to establish the learning order. A large difference between $|a|$ and $|a'|$ can change this order.

Here, $T_\eps(v;f)$ is the smallest $t$ such that $R_v(s)\le\eps$ for every $s\ge t$. Thus, $T_\eps(u;f)<T_\eps(u';f)$ means that the error in direction $u$ stays below $\eps$ earlier. It does not mean that this error first falls below $\eps$ earlier: the error in direction $u'$ may fall below $\eps$ and then rise above it again. The assumptions $\beta\gamma<1$ and $\gamma<\beta$ ensure that the denominators in $R_u(t)$ and $R_{u'}(t)$ are nonzero. They also imply $\gamma^2<1$, which is equivalent to $\ell_u+\ell_{u'}>1$. This restriction is part of the sufficient conditions in the theorem; the same learning order may still hold when it is not satisfied.

\subsection{Numerical experiments}\label{sec:experiments}

\subsubsection{Verification of Theorem~\ref{thm:learning-time}}\label{sec:exp-learning-time}
We examine Theorem~\ref{thm:learning-time} on $\Omega=[-1,1]^2$ using tensor products of the one-dimensional basis families
\begin{align}
\mathcal B_{\mathrm L}&=\{P_r:r=0,\ldots,63\},\label{eq:exp-order-legendre}\\
\mathcal B_{\mathrm W}&=\left\{\frac{1}{\sqrt{2}}\right\}\cup
\left\{\frac{\cos(m\pi x)}{\sqrt{1+m}},\frac{\sin(m\pi x)}{\sqrt{1+m}}:m=1,\ldots,32\right\},\label{eq:exp-order-weighted}
\end{align}
where $P_r$ is the standard Legendre polynomial of degree $r$. The two classes contain $4096$ and $4225$ basis functions, respectively. We set $\vartheta=(1/2,1/2)$, $\eta=1/4$, $\eps=0.1$, and $c=1/2$, and train each target $f=\chi_{\Bk}=2^{-1}\ee^{\ii\Bk\cdot\bm x}$ separately from zero initialization. The largest Gram eigenvalues are $4$ and $1$, so the step size satisfies $\eta\le1/\lambda_1$ for both classes. We test the twelve frequencies
\begin{equation}
\begin{aligned}
\Bk/\pi\in\bigl\{&(0.5,0.5),(0.5,1.5),(1.5,1.5),(2.5,2.5),\\
                 &(0.5,5.5),(5.5,0.5),(5.5,5.5),(5.5,9.5),\\
                 &(10.5,10.5),(15.5,15.5),(24.5,24.5),(31.5,31.5)\bigr\}.
\end{aligned}
\end{equation}

Using the corresponding Gram spectra and local spectral weights, we maximize \eqref{eq:quantile-lower} and minimize \eqref{eq:harmonic-time} and \eqref{eq:quantile-time} separately over their admissible cutoffs among the distinct Gram eigenvalues. For example, at $\Bk=\pi(5.5,9.5)$, the right-hand sides of \eqref{eq:quantile-lower}--\eqref{eq:quantile-time} are $2166.975$, $4681.750$, and $6031.409$ for the Legendre class, and $516.224$, $1042.022$, and $1199.293$ for the weighted trigonometric class, respectively, to three decimal places. We then compute $T_\eps(\Bk)$ by full batch coefficient gradient descent, using the exact Gram matrix and target projections evaluated by 384-point Gauss--Legendre quadrature in each coordinate. The learning time is the first iteration satisfying $\rho_{\Bk}(t)\le\eps$, with the relative error measured against the attainable Fourier limit as in \eqref{eq:rho-T}.

Figure~\ref{fig:exp-learning-time} compares the measured learning times (black stars) with the two lower and two upper bounds in \eqref{eq:jensen-time}--\eqref{eq:quantile-time}. The bounds follow the variation in learning time across the sampled frequencies. Both lower bounds lie below $T_\eps(\Bk)$, and both upper bounds lie above it, for all $24$ runs, in agreement with Theorem~\ref{thm:learning-time}. At $\Bk=\pi(5.5,9.5)$, gradient descent gives $T_\eps(\Bk)=3090$ and $654$ for the Legendre and weighted trigonometric classes, respectively.

\begin{figure}[!htbp]
\centering
\includegraphics[width=.49\textwidth]{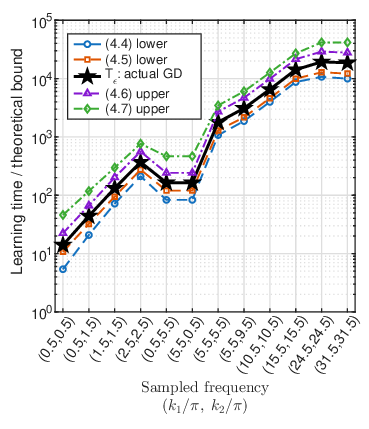}\hfill
\includegraphics[width=.49\textwidth]{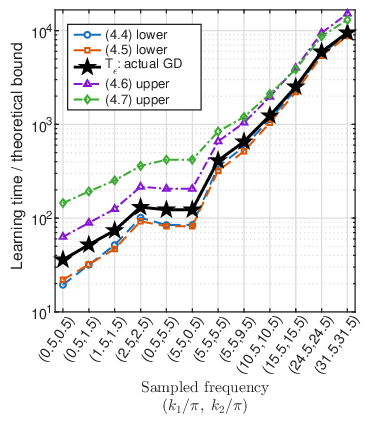}
\caption{Verification of Theorem~\ref{thm:learning-time} for the two-dimensional unnormalized Legendre basis (left) and weighted trigonometric basis (right).}
\label{fig:exp-learning-time}
\end{figure}
\FloatBarrier

\subsubsection{Verification of Theorem~\ref{thm:comparison}}\label{sec:exp-comparison}
\begin{itemize}
    \item Regions satisfying Theorem~\ref{thm:comparison}(a). 
    
    We use the Legendre and weighted trigonometric bases \eqref{eq:exp-order-legendre} and \eqref{eq:exp-order-weighted} on $\Omega=[-1,1]$. We set $\vartheta=1/2$, $\eta=1/2$, $\eps=0.1$, and $c=1/2$, and consider all frequency pairs in $\mathcal F^2$, where $\mathcal F=\{\pi(j+1/2):j=0,\ldots,31\}$. Each target $f=\chi_k=2^{-1/2}\ee^{\ii kx}$ is trained separately from zero initialization. We compute the optimized bounds $U_\eps$, $L_\eps$, and the learning times as in Section~\ref{sec:exp-learning-time}.

    Figure~\ref{fig:exp-order-regions} shows the sampled regions where $U_\eps(k)<L_\eps(k')$. This condition holds for $206$ of the $1024$ pairs with the Legendre class and $317$ pairs with the weighted trigonometric class. The colored cells all correspond to sampled pairs satisfying \(k<k'\). For each such pair, \(U_\eps(k)<L_\eps(k')\), and Theorem 4.2(a) gives \(T_\eps(k)<T_\eps(k')\), i.e., frequency $k$ can be learned faster. Thus both orderings hold for these pairs, although the experiment does not establish a causal relation between them.    
    \begin{figure}[!htbp]
    \centering
    \includegraphics[width=.49\textwidth]{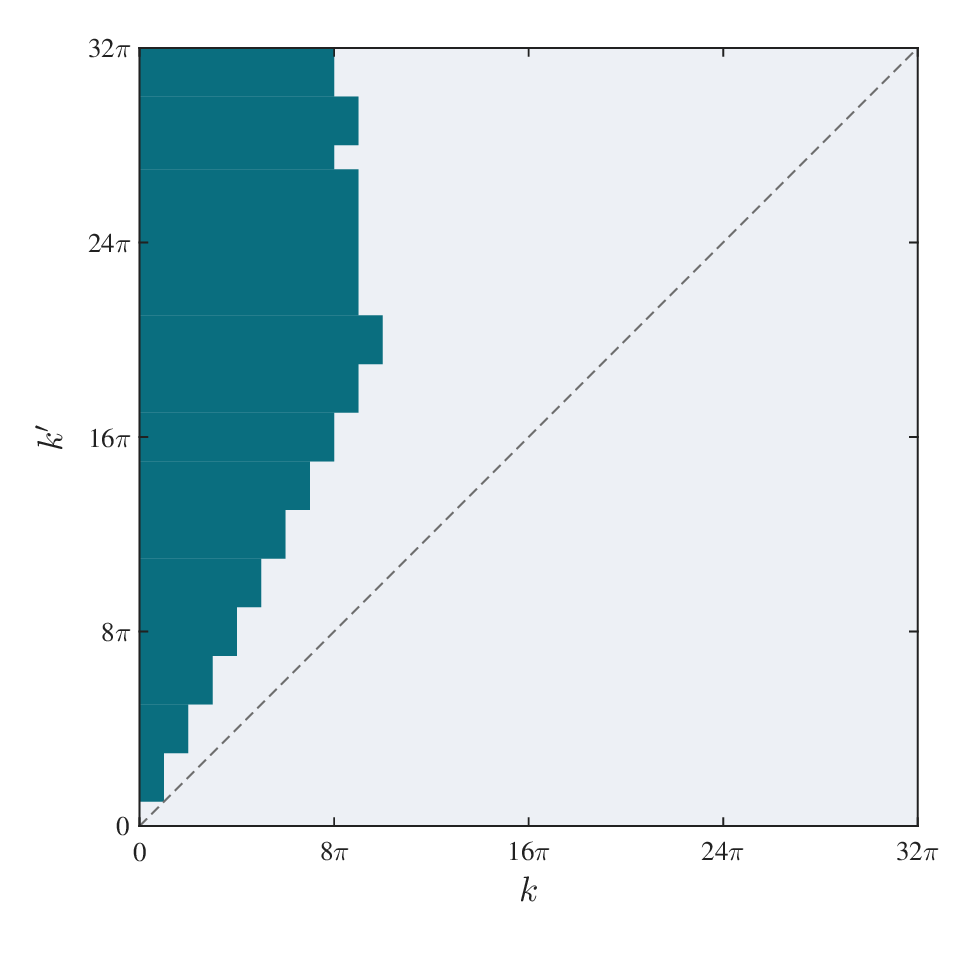}\hfill
    \includegraphics[width=.49\textwidth]{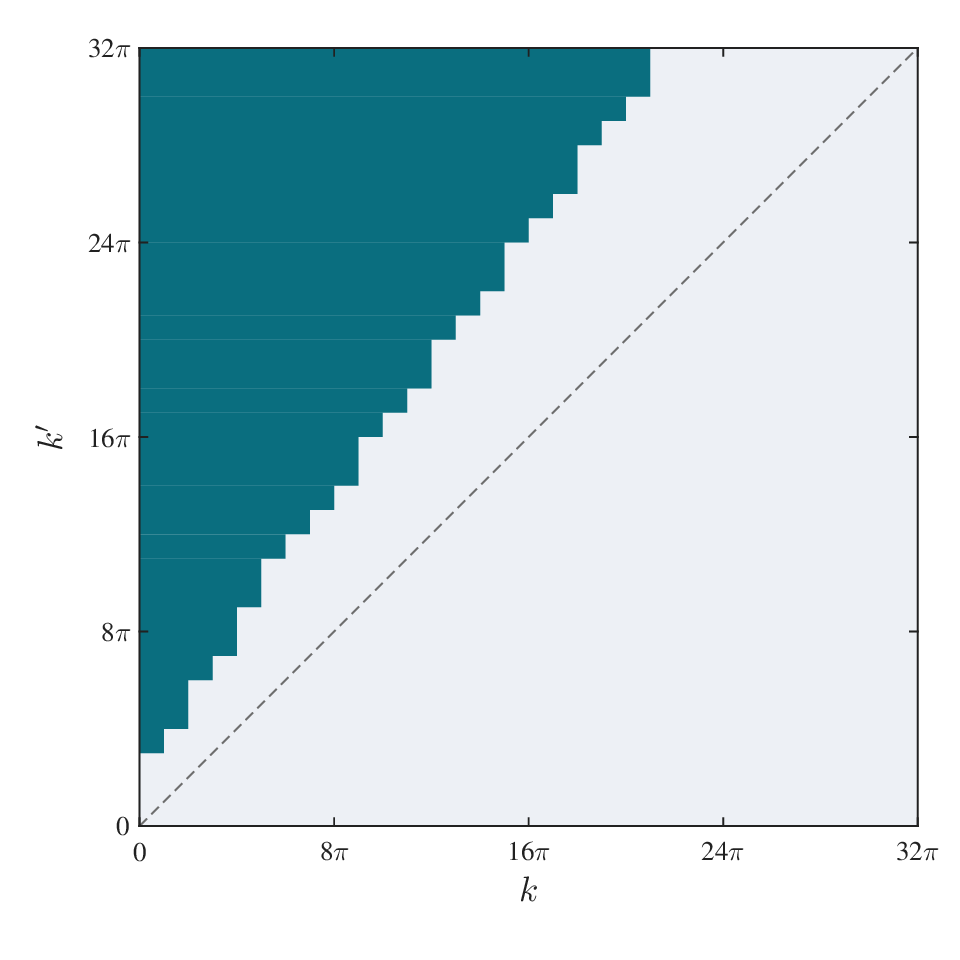}
    \caption{Frequency pairs satisfying $U_\eps(k)<L_\eps(k')$ (teal) for the Legendre class (left) and weighted trigonometric class (right).}
    \label{fig:exp-order-regions}
    \end{figure}    
    \item {Tests of Theorem~\ref{thm:comparison}(b) and (c).}
    
    We use the corresponding tensor product bases on $\Omega=[-1,1]^2$, containing $4096$ and $4225$ basis functions, respectively. We set $\vartheta=(1/2,1/2)$, $\eta=1/4$, $\eps=0.1$, $c=1/2$, and $p=0.8$ in Theorem~\ref{thm:comparison}(b). Each target $f=\chi_{\Bk}=2^{-1}\ee^{\ii\Bk\cdot\bm x}$ is trained separately from zero initialization. We test the two frequency pairs
    \begin{equation}
    \begin{array}{c|cc}
    \text{Case}&\Bk/\pi&\Bk'/\pi\\ \hline
    \mathrm A&(0.5,1.5)&(5.5,9.5)\\
    \mathrm B&(2.5,2.5)&(10.5,10.5)
    \end{array}
    \end{equation}
    
    Table~\ref{tab:exp-order-conclusions} reports $U_\eps(\Bk)$, $L_\eps(\Bk')$, and the measured learning times for both classes. Both frequency pairs satisfy the conditions in Theorem~\ref{thm:comparison}(b) and (c) for each class. In all four tests, the bounds satisfy $U_\eps(\Bk)<L_\eps(\Bk')$, and gradient descent gives $T_\eps(\Bk)<T_\eps(\Bk')$, in agreement with the predicted strict learning order.
    
    \begin{table}[!htbp]
    \centering
    \caption{Bounds and learning times.}
    \label{tab:exp-order-conclusions}
    \small
    \setlength{\tabcolsep}{4pt}
    \renewcommand{\arraystretch}{1.15}
    \begin{tabular}{@{}llccrrrr@{}}
    \hline
    Basis & Case & (b) & (c) & $U_\eps(\Bk)$ & $L_\eps(\Bk')$ & $T$ & $T'$\\
    \hline
    Legendre & A & Yes & Yes & 66.511 & 2166.975 & 44 & 3090\\
     & B & Yes & Yes & 560.567 & 4607.374 & 369 & 6522\\
    \hline
    Weighted trig. & A & Yes & Yes & 89.091 & 577.139 & 52 & 654\\
     & B & Yes & Yes & 216.035 & 1130.842 & 131 & 1240\\
    \hline
    \end{tabular}
    \end{table}
\end{itemize}

\subsubsection{Verification of Theorem~\ref{cor:two-modes}}\label{sec:exp-general-targets}
We examine Theorem~\ref{cor:two-modes} on $\Omega=[-1,1]$ using the unnormalized Legendre and weighted trigonometric bases \eqref{eq:exp-order-legendre} and \eqref{eq:exp-order-weighted}, containing $64$ and $65$ functions, respectively. We set $\vartheta=1/2$, $\eta=1/2$, and $\eps=0.1$, and train $f=u+u'$ from zero initialization, where $u=2^{-1/2}\ee^{\ii kx}$ and $u'=2^{-1/2}\ee^{\ii k'x}$. The frequency pairs $(k/\pi,k'/\pi)$ are $(0.5,9.5)$ in Case~A and $(2.5,15.5)$ in Case~B. The two modes are orthonormal, and $\eta\le1/\lambda_1$ holds for both classes. We use coefficient gradient descent with the exact Gram matrix and target projections evaluated by 384-point Gauss--Legendre quadrature. Each learning time is one plus the last iteration above the threshold, with a decreasing spectral envelope controlling the uncomputed tail.

Table~\ref{tab:exp-two-mode-order} reports the right-hand sides of \eqref{eq:two-low} and \eqref{eq:two-high} at $t_1$, together with the measured learning times. All four tests satisfy $\beta\gamma<1$ and $\gamma<\beta$, and both bounds hold at every computed iteration. At the listed $t_1$, the right-hand side of \eqref{eq:two-low} is below $\eps$, whereas that of \eqref{eq:two-high} is above $\eps$.  Thus the lower-frequency component reaches and remains below the prescribed relative error earlier, in agreement with Theorem~\ref{cor:two-modes}.

\begin{table}[!htbp]
\centering
\caption{Bounds and learning times for $f=u+u'$.}
\label{tab:exp-two-mode-order}
\small
\setlength{\tabcolsep}{5pt}
\renewcommand{\arraystretch}{1.15}
\begin{tabular}{@{}llrrrrr@{}}
\hline
Basis & Case & \eqref{eq:two-low} & \eqref{eq:two-high} & $T$ & $t_1$ & $T'$\\
\hline
Legendre & A & 0.077622 & 0.750207 & 6 & 9 & 111\\
 & B & 0.098914 & 0.518867 & 29 & 39 & 181\\
\hline
Weighted trig. & A & 0.096216 & 0.278414 & 8 & 21 & 47\\
 & B & 0.099132 & 0.361444 & 17 & 27 & 75\\
\hline
\end{tabular}
\end{table}

\section{Conclusion}\label{sec:conclusion}
We have studied a general frequency principle for gradient descent in finite linear approximation classes on a bounded box. Our results describe two complementary aspects of frequency-dependent learning. Under zero initialization and suitable Fourier decay assumptions, the output amplitude relative to the target amplitude vanishes at high frequencies along the prescribed frequency set. This holds at every fixed iteration and, under a stable step size, uniformly throughout training. Such small relative movement can reflect limited approximation capacity even when convergence to the attainable limit is fast.

To characterize convergence separately, we have derived learning-time bounds after removing the limiting approximation error. Comparing these bounds and the associated spectral quantities gives sufficient conditions for strict learning order between single-frequency targets trained separately. The analysis also gives conditions for two frequency components in the same run to reach and remain below a prescribed relative-error threshold in a specified order, accounting for their interaction.

The Legendre experiments illustrate the high-frequency movement bounds. Experiments with unnormalized Legendre and weighted trigonometric bases support the learning-time bounds and the predicted order for the tested pairs. The orthonormal periodic trigonometric example gives equal relative convergence rates for all represented frequencies. These examples show how the framework characterizes different learning behaviors according to the basis and its scaling.

Future work may focus on characterizing broader classes of basis functions for which the comparison condition also determines frequency order. Sharper learning-time bounds and estimates of interactions between components may extend the ordering results to targets with more than two components. Another direction is to study models whose basis functions evolve during training, where changes in the training operator must be incorporated into the comparison criteria

\bmhead{Acknowledgments}
This work was supported by the Major Research Plan of the National Natural Science Foundation of China (92370101).

\bmhead{Data availability}
The data and code supporting the numerical experiments in this study are available from the corresponding author upon reasonable request.

\bmhead{AI use declaration}
ChatGPT (OpenAI, GPT-6.1-sol) was used to generate the initial draft of the code for the numerical experiments and to identify the two-polynomial example presented in Eq.~\eqref{eq:two-polynomial-example}. The authors take full responsibility for the final code, the example, and the numerical results reported in this manuscript.

\bibliography{references_rev1}
\end{document}